\let\citealp\citep
\title{Stochastic Optimization with Laggard Data Pipelines}
\author{
  Naman Agarwal$^1$ \qquad Rohan Anil$^2$ \qquad Tomer Koren$^{23}$ \\
  Kunal Talwar$^4$\footnote{Work performed while at Google Brain.} \qquad Cyril Zhang$^5$\footnote{Work performed while at Google AI Princeton and Princeton University.}\\
  \vspace{-2mm} \\
  {\smaller$^1$ Google AI Princeton} \qquad
  {\smaller$^2$ Google Research} \qquad
  {\smaller$^3$ Tel Aviv University} \\
  {\smaller$^4$ Apple} \qquad
  {\smaller$^5$ Microsoft Research} \\
  \vspace{-2mm} \\
  {\smaller \texttt{\{rohananil, namanagarwal\}@google.com, tkoren@tauex.tau.ac.il},} \\
  {\smaller \texttt{ktalwar@apple.com, cyrilzhang@microsoft.com} }
}
\newcommand{\defeq}{\stackrel{\text{def}}{=}}
\newcommand{\D}{\mathcal{D}}
\newcommand{\A}{\mathcal{A}}
\newcommand{\W}{\mathcal{W}}
\newcommand{\y}{\ensuremath{\mathbf y}}
\def\y{\mathbf{y}}
\newcommand{\ignore}[1]{}
\theoremstyle{plain}
\newtheorem{theorem}{Theorem}
\newtheorem{lemma}[theorem]{Lemma}
\newtheorem*{theorem*}{Theorem}
\newtheorem*{lemma*}{Lemma}
\newtheorem*{corollary*}{Corollary}
\newtheorem*{proposition*}{Proposition}
\newtheorem*{claim*}{Claim}
\newtheorem*{fact*}{Fact}
\newtheorem*{observation*}{Observation}
\theoremstyle{definition}
\newtheorem{definition}[theorem]{Definition}
\newtheorem*{definition*}{Definition}
\newtheorem*{remark*}{Remark}
\newtheorem*{example*}{Example}
 \theoremstyle{plain}
\newtheorem*{theoremaux}{\theoremauxref}
\gdef\theoremauxref{1}
\DeclareMathAlphabet{\mathbfsf}{\encodingdefault}{\sfdefault}{bx}{n}
\DeclareMathOperator*{\argmin}{arg\,min}
\newcommand{\abs}[1]{|#1|}
\newcommand{\norm}[1]{\|#1\|}
\DeclareMathOperator*{\E}{\mathbb{E}}
\newcommand{\reals}{\mathbb{R}}
\renewcommand{\leq}{~\le~}
\renewcommand{\geq}{~\ge~}
\let\oldtfrac\tfrac
\renewcommand{\tfrac}[2]{\smash{\oldtfrac{#1}{#2}}}
\newcommand{\braces}[1]{\left\{#1\right\}}
\newcommand{\pa}[1]{\left(#1\right)}
\newcommand{\ang}[1]{\left<#1\right>}
\newcommand{\bra}[1]{\left[#1\right]}
\renewcommand{\abs}[1]{\left|#1\right|}
\DeclareMathOperator*{\Minimize}{\mathrm{minimize}}
\def\eqref#1{equation~\ref{#1}}
\def\1{\bm{1}}
\DeclareMathAlphabet{\mathsfit}{\encodingdefault}{\sfdefault}{m}{sl}
\SetMathAlphabet{\mathsfit}{bold}{\encodingdefault}{\sfdefault}{bx}{n}
\newcommand{\R}{\mathbb{R}}
\newcommand{\fBar}{\bar{f}}
\newcommand{\out}{\mathrm{out}}
\newcommand{\pivot}{\mathrm{pivot}}
\newcommand{\init}{\mathrm{init}}
\newcommand{\batch}{\pmb{\xi}}
\newcolumntype{P}[1]{>{\centering\arraybackslash}p{#1}}
\newcolumntype{M}[1]{>{\centering\arraybackslash}m{#1}}
\begin{document}
\maketitle

\begin{abstract}
State-of-the-art optimization is steadily shifting towards massively parallel pipelines with extremely large batch sizes. As a consequence, CPU-bound preprocessing and disk/memory/network operations have emerged as new performance bottlenecks, as opposed to hardware-accelerated gradient computations. In this regime, a recently proposed approach is data echoing (Choi et al., 2019), which takes
repeated gradient steps on the same batch while waiting for fresh data to arrive from upstream. We provide the first convergence analyses of ``data-echoed'' extensions of common optimization methods, showing that they exhibit provable improvements over their synchronous counterparts.
Specifically, we show that in convex optimization with stochastic minibatches, data echoing affords speedups on the curvature-dominated part of the convergence rate, while maintaining the optimal statistical rate.
\end{abstract}

\section{Introduction}

Recent empirical successes in large-scale machine learning have been powered by
massive data parallelism and hardware acceleration, with batch sizes trending beyond 10K+ images \cite{you2017large} or 1M+ tokens \cite{brown2020language}.
Numerous interdisciplinary sources \cite{ben2019demystifying,chien2018characterizing, jouppi2017datacenter,mayer2020scalable} indicate that the performance bottlenecks of contemporary deep learning pipelines can lie in many places other than gradient computation. In other words, since the initial breakthroughs in hardware-accelerated deep learning \cite{raina2009large,ciregan2012multi,krizhevsky2012imagenet}, GPUs (and TPUs, FPGAs, etc.) have become too fast for upstream data loaders and preprocessors to keep up with.

\citet{choi2019faster} propose \emph{data echoing}, a simple and versatile way to improve training performance in this regime. Each stage of the data pipeline runs asynchronously, oblivious to
whether its input has been refreshed upstream. In particular, the optimization algorithm may choose to take additional gradient steps before a minibatch is
refreshed, rather than spend idle time waiting for more data. The authors
present a large-scale proof-of-concept empirical study, and find that data
echoing affords a $3.25\times$ speedup in a network-bound ImageNet setting.

Some natural curiosities arise from this practice: \emph{When might this
overfit? How carefully should one adjust the step size of an echoed gradient?
Does acceleration work?} A theoretical understanding of convergence guarantees
for these data-echoed optimization algorithms is missing.

In this paper, we settle the issues of convergence and generalization for echoed gradient methods in convex optimization. We show that these methods can match the optimization performance of their non-stochastic counterparts, while achieving optimal statistical rates. As state-of-the-art batch sizes continue to grow, along with the distributed systems that enable them, we hope that this will provide a first theoretical grounding towards understanding the algorithmic and statistical challenges in these hardware-motivated optimization settings.

\begin{figure}[t]
  \centering
  \includegraphics[width=0.5\linewidth]{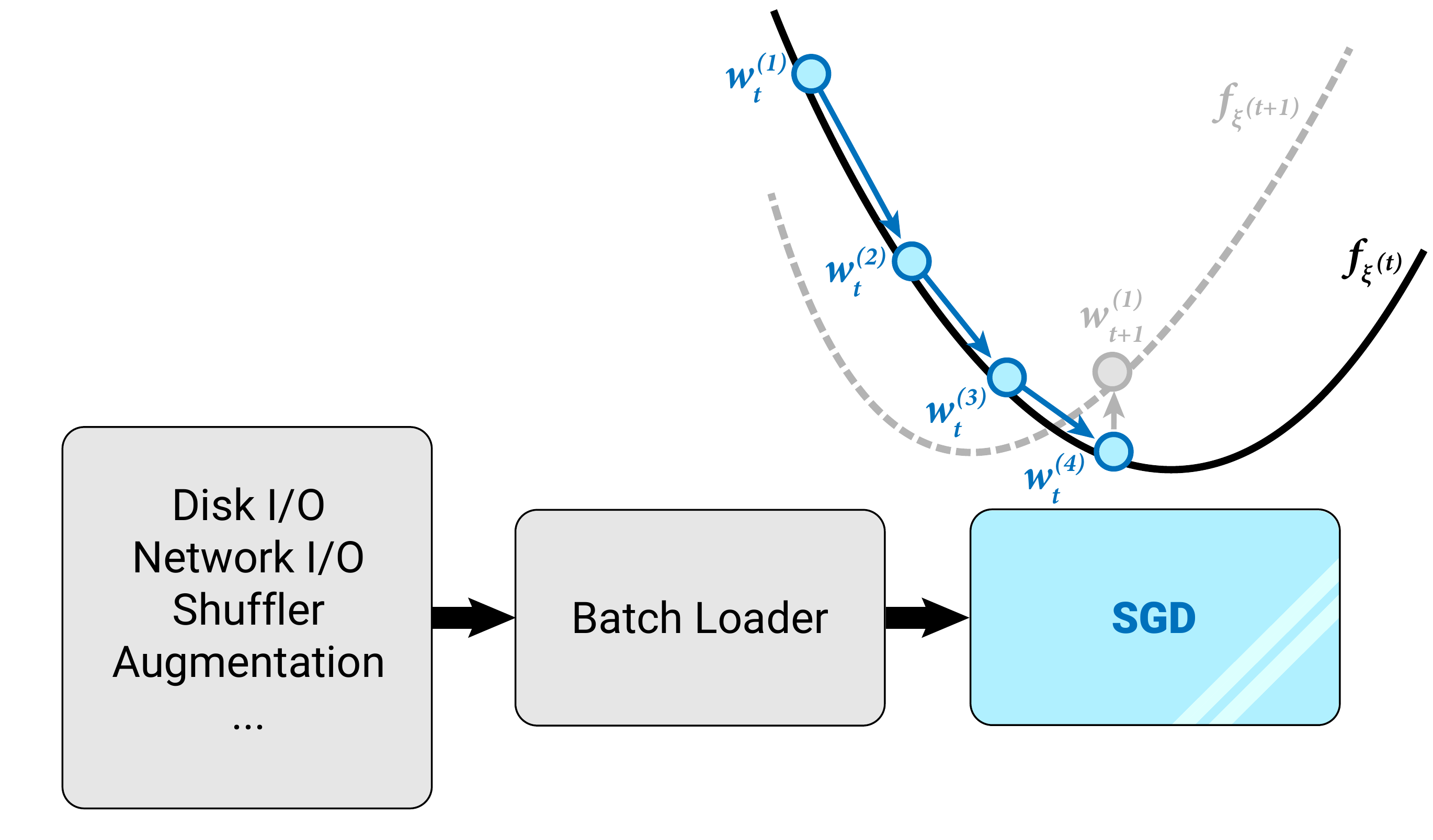}
  \caption{Schematic of data echoing, inspired by \citet{choi2019faster}. If the upstream data pipeline is $K=4$ times slower than SGD, then SGD can potentially take that many steps on the same batch before the next one arrives.}
  \label{fig:my_label}
\end{figure}

\subsection{Technical contributions}

Our model of data echoing is parameterized by the batch size $B$, the number of
fresh i.i.d.~batches $T$, and the \emph{echoing factor} $K$, which is the number
of gradient steps an algorithm can take on the (convex) loss on each batch. This
reflects the hardware-determined setting where the data loader is at least $K$ times slower than the optimizer.

\paragraph{Convergence in all data echoing regimes.} 

We first show that echoed SGD, with the correctly tuned step size, achieves a factor-$K$ speedup on the curvature term of the standard convergence rate, while keeping the optimal statistical term. Next, we develop an echoed method
that is oblivious to the echoing factor $K$, getting the same rates for echoed
SGD with an appropriately chosen proximal regularizer. Finally,
we show that Nesterov's accelerated gradient descent, when echoed, achieves the optimal rates on quadratic losses. As a side contribution, we fix a small error in a technical lemma
in \citep{chen2018stability}, used in establishing the stability of AGD on quadratics. For general convex losses, we arrive at the same open question as these authors.

\begin{table}
\centering
\begin{tabular}[c]{| M{1.2cm} | M{3.2cm} | M{3.5cm} |}
\hline
& $T=1$ & $T$ general \\\hline 
$K=1$ & \multicolumn{2}{c|}{SGD}\\\hline
$K$ small & Compute-bound ERM & Data echoing (Thm.~\ref{thm:degd}) \\\hline
$K$ large & Data-bound ERM & Approx-Prox \cite{wang2017memory} \\\hline
$K \rightarrow \infty$ & Statistical ERM & Minibatch-Prox \cite{wang2017memory}\\
\hline
\end{tabular}
\vspace{3ex}
\caption{Regimes of echoing factor $K$ and number of batches $T$ which our analyses interpolates.}
\label{table:regimes}
\end{table}

\paragraph{Full interpolation between known regimes.}

To set up notation, suppose that we go over $T$ batches of data, and perform $K$
echoed gradient steps for each batch. In the special case of $T=1$ fresh
batches, the problem becomes empirical risk minimization with a limited
computational budget of $K$ gradient steps. When $K$ is small, the error is
dominated by a \emph{curvature} term, while for large enough $K$ this falls
below the \emph{statistical} term.

Motivated by the communication-limited setting, \citet{wang2017memory} focus
on the case where $T$ is general and $K\rightarrow \infty$, analyzing the
convergence of \emph{exact} optimization of the prox-regularized minibatch loss.
They develop a mild ``approx-prox'' guarantee when $K$ is large enough to enable
an \emph{exact+perturbation} analysis. Our analysis generalizes and strengthens these results, handling all values of $K$; \cref{table:regimes} summarizes
this discussion.
When $B \rightarrow \infty$, the statistical problem disappears, and we recover the classical setting of full gradient descent with $KT$ oracle calls \citep{boyd2004convex,nesterov2013introductory}.

\begin{table*}
\centering
    \begin{tabular}[c]{|M{2.8cm}|M{4.5cm}|M{4.5cm}|}
         \hline
         {\bf Algorithm} & \begin{tabular}{c}
              {\bf Standard}
         \end{tabular} & \begin{tabular}{c}
              {\bf Data-echoed}
         \end{tabular} \\ 
         \hline
         \hline
         SGD & \begin{tabular}{c}
              $\displaystyle
              O\left( \frac{\beta D^2}{T} + \frac{\rho D}{\sqrt{BT}}\right)$
              \\[2ex]
              (classical; see \citet{lan2012optimal})
         \end{tabular} & \begin{tabular}{c}
              $\displaystyle
              O\left( \frac{\beta D^2}{KT} + \frac{\rho D}{\sqrt{BT}}\right)$
              \\[2ex]
              (\cref{thm:degd})
         \end{tabular} 
         \\[2ex]
         \hline
         \begin{tabular}{c}
              Minibatch-Prox
         \end{tabular} & \begin{tabular}{c}
             $\displaystyle
             O\left( e^{-K/\kappa} + \frac{\rho D}{\sqrt{BT}}\right)$ 
             \\[2ex]
             (\citet{wang2017memory}; $K$ large)
         \end{tabular} & \begin{tabular}{c}
               $\displaystyle
               O\left( \frac{\beta D^2}{KT} + \frac{\rho D}{\sqrt{BT}}\right)$
               \\[2ex]
               (\cref{thm:depgd})
          \end{tabular}
          \\
         \hline
         \begin{tabular}{c}
              Stochastic AGD
         \end{tabular} & \begin{tabular}{c}
              $\displaystyle
              O\left( \frac{\beta D^2}{T^2} + \frac{\rho D}{\sqrt{BT}}\right)$
              \\[2ex]
              (\citet{lan2012optimal})
         \end{tabular} & \begin{tabular}{c}
              $\displaystyle
              O\left( \frac{\beta D^2}{K^2T^2} + \frac{\rho D}{\sqrt{BT}}\right)$
              \\[2ex]
              (\cref{thm:deagd}; quadratics)
          \end{tabular} \\
         \hline
    \end{tabular}
    \caption{Single-step and \emph{data-echoed} convergence rates of stochastic
         optimization algorithms studied in this paper. Notice that the
         optimization terms depend analogously on the total number of steps
         $KT$, and the statistical terms have optimal dependence on the total
         number of i.i.d.~samples $BT$.}
\label{table:rates}
\end{table*}

\paragraph{Stability-based analysis.}

We provide a modular proof framework for data echoing convergence bounds, based on uniform stability \cite{bousquet2002stability} and a potential-based notion of regret, which isolates the ``bias'' (curvature) and ``variance'' (generalization) components of the problem. This recipe (Theorem~\ref{thm:main}) can be used to sharpen bounds in more restricted settings, or analyze future data-echoed algorithms.

\subsection{Motivation and context}

It is well-known in the practice of GPU training that model parameter updates are not necessarily the performance bottleneck; this is why SSD storage is critical for pipelines on the scale of ImageNet \cite{imagenet}. For quantitative studies of I/O performance in deep learning, see
\cite{chien2018characterizing,ying2018image}. Many empirical advances have stemmed from innovations in data augmentation \cite{cubuk2019autoaugment,hoffer2019augment,shorten2019survey}. Unlike neural
network training and inference, these data transformations can be highly sequential and/or heterogeneous, and must be done on CPU. Unlocking GPU parallelism for CPU-bound computations is often a significant engineering effort \cite{dalton2019gpu,guirao2019fast,khadatare2020leveraging,liang2018gpu}.

Extremely large batch sizes have become the norm in training state-of-the-art
models
\citep{bapna2019massively,brown2020language,devlin2018bert,shazeer2017outrageously,you2017large}.
An overwhelming theme has been that \emph{constant factors matter}; for example, memory-bound optimizers \citep{shazeer2018adafactor,anil2019memory,chen2019extreme} care about factors of 2-3.
When selecting hyperparameters in large-batch training setups, it is common to balance the curvature- and noise-dominated terms \cite{mccandlish2018empirical,kaplan2020scaling,shallue2018measuring,smith2017don}. This underscores the need to better understand the fine-grained dependences on $B$, $T$, and $K$, especially the resources at stake are on the scale of GPU-years.

The idea of repeated steps on a batch/workers has also been investigated in the context of federated learning \cite{kairouz2019advances}, where a related concept is referred to as \emph{local SGD} or \emph{federated averaging}. There are two key distinctions: federated learning considers multiple copies of local SGD running on different workers, which synchronize intermittently through averaging; under the most simplified assumptions, each individual gradient step within a worker is taken on a fresh batch. While the improvements obtained in this recent and concurrent line of work (see \cite{woodworth2020local} and references therein) bear resemblance to our bounds, we do not see a direct reduction in either direction. Indeed, due to the distinctions mentioned, getting similar improvements to the curvature term in federated learning is not possible beyond quadratics, as shown by \cite{woodworth2020local}. Obtaining optimal rates for convex functions in the federated learning setting remains an interesting open problem.

\subsection{The bias-variance problem in data echoing}
As mentioned earlier, data echoing presents a natural tradeoff between the
optimization gains from repeating gradient steps vs. the potential loss of
generalization due to overfitting to stale batches. To understand this
in detail, let us revisit the standard convergence guarantee for SGD on smooth
functions:
\[ 
  \E[F(w_\out)] - F(w^*) \leq O\left(\frac{1}{T} + \frac{1}{\sqrt{BT}}\right).
\] 
We interpret the first term as a \emph{bias} (\emph{curvature}) term, which
diminishes at a faster rate due to smoothness. The second term is the
\emph{variance} (\emph{statistical}) term, which arises due to the
stochasticity in the data, and thus naturally scales as the inverse square root
of the batch size. Viewing $B$ as fixed, the variance term is intrinsic to the
data; therefore, we cannot expect data-echoing (or any algorithm) to give us
improvements on that term for free. In fact, it is possible to make this term
degrade, by overfitting on a batch. On the other hand, we can expect the bias
term, which is governed by progress on the curvature of the underlying population loss, to decrease
as we are given more echoing steps $K$. In light of this, the best analogous convergence
rate one should hope to achieve in the data-echoing setting is
\[
  O\left(\frac{1}{KT} + \frac{1}{\sqrt{BT}}\right).
\] 
Our results establish exactly this rate for the data-echoed version of gradient
descent. The data-echoed version of accelerated gradient descent is also shown
to possess similar gains but with a faster rate of $K^2T^2$. The challenge is to
prevent overfitting; obtaining such rates requires careful control (depending on
$K$) of step sizes. Later, we alleviate this need via data-echoed proximal GD,
whose parameters are independent of $K$.  

\subsection{Overview of techniques}
All of our theorems follow the same analysis structure. In particular, we
formalize a notion of \emph{potential-bounded regret} (\cref{def:pbdregret}),
which connects an algorithm's function-value progress on a minibatch to a
decrement on a certain potential function with respect to an arbitrary point.
This potential function depends on the algorithm in question, but the key
property is that it telescopes when summed over batches; this provides a fast
rate on the bias term with respect to $T$.

The second piece of the analysis connects function-value decrease on a batch to
the population objective via the notion of \emph{uniform stability}
(\cref{def:stab}). Note that the potential decrease scales inversely with $K$,
whereas the stability constant increases with $K$ (unless a proximal regularizer is added).
The key to maintaining the optimal statistical rate is to balance these terms via the choice of an appropriate step size. This type of algorithmic stability analysis has appeared various times in the literature \cite{bousquet2002stability,hardt2015train,chen2018stability}; we show here that it affords a way to analyze echoed gradient methods.

\section{Preliminaries}
\subsection{Problem definition}
\noindent Given a convex set $\mathcal{W} \subseteq \reals^n$ and a  domain $\Xi$ with a distribution $\D$, we consider the following stochastic convex optimization problem:
\begin{equation}
    \Minimize_{w \in \mathcal{W}} \quad F(w) \defeq \E_{\xi \sim \mathcal{D}}[ f(w, \xi) ]
    .
\end{equation}
Here $f : \reals^n \times \Xi \rightarrow \reals$ is such that for any $\xi$, $f(\cdot, \xi)$ is convex, differentiable, $\rho$-Lipschitz, and $\beta$-smooth; i.e., for all $w, w' \in \mathcal{W},$
\[
  f(w) - f(w') \leq \ang{\nabla f(w'), w - w'} + \frac{\beta}{2}\|w - w'\|^2.
\]
When the minimizer exists, we define $w^* = \argmin_{w\in\W} F(w)$. However, our results pertaining to optimality gaps $F(w) - F(w^*)$ hold for arbitrary $w^*$, encompassing the case when this minimizer does not exist. We further assume that we have access to an initial point $w_0$ with a bounded distance $D$ from the comparator; i.e.,
$\|w_0 - w^*\| \leq D.$
\paragraph{Minibatch optimization.} We will work in the stochastic minibatch oracle model: at each time step $t$, we receive a new batch (of size $B$) examples 
$\batch^{(t)} = \{\xi^{(t,i)}\}_{i=1}^{B}$
sampled i.i.d.~from the distribution $\D$. 
For any batch of examples $\batch = \{\xi^{(i)}\}$, we define the empirical objective on the batch as
$$\fBar_{\batch}(w) \defeq \frac{1}{|\batch|} \sum_{i=1}^B f(w, \xi^{(i)}). $$
Throughout this paper, we will use \textbf{boldface} $\batch$ to denote a batch of $B$ examples, and unbolded $\xi$ to represent a single example in $\Xi$.
\paragraph{Optimization algorithms.}
We formalize a generic notion of optimization algorithms. Since these algorithms are called repeatedly by the data-echoing procedure, we will augment the output space of optimization algorithms with a notion of \textit{state}, which it internally maintains and passes to the next run of the same algorithm. Formally, an optimization algorithm is an iterative procedure which takes four arguments: an initial point $w_{\init} \in \W$, an initial state $s_{\init}$, the current batch $\batch$ which determines the current objective $\fBar_{\batch}$, and the number of steps $k$. The algorithm outputs a point $w_{\out} \in \W$ and an output state $s_{\out}$. In short, an algorithm $\A$ implements
\[( w_{\out}, s_{\out} ) \leftarrow \A(w_{\init}, s_{\init}, \batch, k).\] We will suppress the notation of one or more of the arguments to $\A$ when they will be
clear from the context, and write $f(\A(\cdot))$ as a shorthand for $f(w_{\out})$, ignoring the auxiliary state $s_{\out}$. Note that $w_{\out}$ and $s_{\out}$ are random variables, determined by the stochastic minibatch $\batch$.

\subsection{Algorithmic stability}

\begin{definition}[Uniform stability]
\label{def:stab}
A deterministic%
\footnote{A similar definition exists for randomized algorithms \cite{bousquet2002stability}. In this work, we focus on deterministic algorithms.}
algorithm $\A$ is considered to be $\epsilon$-uniformly stable with respect to loss function $f : \mathcal{W} \times \Xi \rightarrow \R$ if, for two
batches of data $\batch, \batch'$ differing in exactly one example, we have that
\[\sup_{\xi \in \Xi} \abs{ \; f(\A(\batch),\xi) -f(\A(\batch'), \xi)\;} \leq \epsilon.\]
\end{definition}
The following is a well-known result connecting stability to
generalization~\cite{bousquet2002stability}. Here, we state a version taken
from~\cite{hardt2015train}:
\begin{theorem}
If an algorithm $\A$ is $\epsilon$-uniformly stable, then it holds that
\[\left\lvert \, \E_{\batch \sim \D^B}\left[\;\fBar_{\batch}\pa{\A(\batch)} - F\pa{\A(\batch)}\;\right] \right\rvert \leq \epsilon. \]
\end{theorem}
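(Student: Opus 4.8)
The plan is to run the classical ``ghost sample'' symmetrization argument that underlies stability-based generalization bounds. First I would introduce an independent copy $\batch' = \{\xi'^{(i)}\}_{i=1}^{B} \sim \D^{B}$ of the minibatch, and for each index $j \in \{1, \dots, B\}$ define the hybrid batch $\batch^{(j)}$ obtained from $\batch$ by overwriting its $j$-th example $\xi^{(j)}$ with $\xi'^{(j)}$. The aim is to express both $\E_{\batch}[\fBar_{\batch}(\A(\batch))]$ and $\E_{\batch}[F(\A(\batch))]$ as averages over $j$ of expectations that involve $\A$ applied to batches differing in a single coordinate, so that $\epsilon$-uniform stability can be invoked coordinate-by-coordinate.

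For the empirical side this is immediate from the definition of $\fBar_{\batch}$: $\E_{\batch}[\fBar_{\batch}(\A(\batch))] = \tfrac1B \sum_{j=1}^{B} \E_{\batch}[f(\A(\batch), \xi^{(j)})]$. For the population side, fix any $j$; since $\xi'^{(j)}$ is drawn from $\D$ independently of $\batch$, we have $\E_{\batch}[F(\A(\batch))] = \E_{\batch, \xi'^{(j)}}[f(\A(\batch), \xi'^{(j)})]$, and averaging this identity over $j$ loses nothing. The crucial move is a relabeling: the tuple $(\xi^{(1)}, \dots, \xi^{(B)}, \xi'^{(j)})$ is i.i.d., so exchanging the names of $\xi^{(j)}$ and $\xi'^{(j)}$ leaves the joint law unchanged; under this exchange $\batch$ becomes $\batch^{(j)}$ and the test point $\xi'^{(j)}$ becomes $\xi^{(j)}$, yielding $\E_{\batch}[F(\A(\batch))] = \tfrac1B \sum_{j=1}^{B} \E[f(\A(\batch^{(j)}), \xi^{(j)})]$.

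Subtracting the two expressions gives $\E_{\batch}[\fBar_{\batch}(\A(\batch)) - F(\A(\batch))] = \tfrac1B \sum_{j=1}^{B} \E\big[f(\A(\batch), \xi^{(j)}) - f(\A(\batch^{(j)}), \xi^{(j)})\big]$. Because $\batch$ and $\batch^{(j)}$ differ in exactly one example, each summand is at most $\sup_{\xi \in \Xi} |f(\A(\batch), \xi) - f(\A(\batch^{(j)}), \xi)| \le \epsilon$ in absolute value by \cref{def:stab}; taking absolute values, pushing them inside via the triangle inequality together with convexity of $|\cdot|$ (Jensen), and averaging the $B$ identical bounds yields $|\E_{\batch}[\fBar_{\batch}(\A(\batch)) - F(\A(\batch))]| \le \epsilon$. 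The only delicate point is the relabeling step: one must use that $\A$ is a fixed deterministic map, so that permuting its input coordinates genuinely transports $\A(\batch)$ to $\A(\batch^{(j)})$, and that the supremum over the test example in the stability definition is precisely what licenses replacing the worst case by the specific point $\xi^{(j)}$.
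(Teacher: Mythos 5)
Your proof is correct: it is the classical ghost-sample symmetrization argument, with the key relabeling step (exchanging $\xi^{(j)}$ and $\xi'^{(j)}$, which transports $\A(\batch)$ to $\A(\batch^{(j)})$ because $\A$ is deterministic) handled properly and the per-coordinate application of uniform stability valid since $\batch$ and $\batch^{(j)}$ differ in exactly one example. The paper states this theorem without proof, importing it from Bousquet--Elisseeff and Hardt et al., and your argument coincides with the standard proof in those references, so there is nothing further to compare.
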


\section{The data echoing meta-algorithm}

Given an minibatch optimization algorithm $\A$, its data-echoed extension is defined by Algorithm~\ref{alg:dataecho}.

\begin{algorithm}[h!] 
\caption{Data echoing meta-algorithm}
\label{alg:dataecho}
\begin{algorithmic}[1]
\State \textbf{Input: } Optimizer $\A$; initializer $w_{\init} := w_0$; initial state $s_{\init} := s_0$; number of inner steps $K$
\For{$t = 0, \ldots, T-1$}
  \State Receive a batch of examples $\batch^{(t)} = \{\xi^{(t,i)}\}_{i = 1}^{B}$.
  \State Execute $\A$ on $\batch^{(t)}$ starting at $w_t$ for $K$ steps:
  \quad
  $(w_{t+1},s_{t+1}) \leftarrow \A(w_{t}, s_t, \batch^{(t)}, K).$
 \EndFor
\State \textbf{Output: } Average iterate $w_{\out} := \frac{1}{T} \sum_{t=0}^{T-1} w_{t}$
\end{algorithmic}
\end{algorithm}


\subsection{Data-echoed algorithms}

Using the framework of Algorithm~\ref{alg:dataecho}, we introduce the data-echoed versions of three ubiquitous optimization algorithms. In \cite{choi2019faster}, several types of data echoing are defined; we focus on what the authors call \emph{batch echoing}.

\paragraph{Data-echoed gradient descent.}
We first formalize gradient descent in our optimization framework. The gradient descent procedure only contains the \textit{fixed} learning rate as the state:
\[s_{\init} = s_{\out} := \{\eta\}.\]
The iterations defining the inner algorithm $\A$ are straightforward:
\[w_0 = w_{\init},\;\; \{w_{j+1} = w_j - \eta \nabla \fBar_{\batch}(w_j)\}_{j=0}^{K-1},\;\; w_{\out} = w_K.\]
When Algorithm~\ref{alg:dataecho} is instantiated with this choice of $\A$, we call the overall procedure \emph{data-echoed gradient descent}.

\paragraph{Data-echoed proximal gradient descent.}
The state of the proximally-regularized gradient descent procedure contains three variables: the fixed learning rate $\eta$, the prox parameter $\gamma$, and $w_{\pivot}$, the center of the prox term:  
\[s_{\init} := \{\eta, \gamma, w_{\pivot}\}.\]
We now define the proximal function 
\[\fBar_{\mathrm{prox}}(w) = \fBar_{\batch}(w) + \frac{\gamma}{2}\|w-w_{\pivot}\|^2.\]
The iterations proceed in same way as gradient descent, but on $\fBar_{\mathrm{prox}}$:
\[w_0 = w_{\init},\;\; \{w_{j+1} = w_j - \eta \nabla \fBar_{\mathrm{prox}}(w_j)\}_{j=0}^{K-1},\;\; w_{\out} = w_K.\]
The output returned is
$s_{\out} = \{ \eta, \gamma, \frac{1}{K}\sum_{j=0}^{K-1} w_j \}$. This particular choice of returning the average iterate as the next $w_{\pivot}$ simplifies our analysis. With this choice of $\A$, this overall procedure will be called \emph{data-echoed proximal gradient descent}.
\paragraph{Data-echoed accelerated gradient descent.}
The state space for accelerated gradient consists of a step size $\eta$, an initial momentum vector $d$, and a momentum scale factor $\lambda$; thus
$s_{\init} = \{\eta, d, \lambda\}.$
Define the following scalar sequences with $\lambda_0 = \lambda$: 
\[
    \lambda_{j+1}^2 - \lambda_{j+1} = \lambda_{j}^2 ,
    \quad\quad 
    \gamma_{j+1} = \frac{\lambda_j - 1}{\lambda_{j+1}} .
\]
The updates now follow the progression as in Nesterov's acceleration \cite{nesterov1983method}: 
\begin{align*}
    w_0 = w_{\init},\;\;d_0 = d, \quad 
    w_{j+1} = (w_j + d_j) - \eta \nabla \fBar_{\batch}(w_j + d_j),
\quad
    d_{j+1} = \gamma_{j+1} (w_{j+1} - w_j)
    .
\end{align*}
Finally, the outputs are given by $s_{\out} = \{\eta, d_K, \lambda_K\}, w_{\out} = w_{K}$.

With this choice of $\A$, we refer to the overall procedure as \emph{data-echoed accelerated gradient descent}. 

\section{Convergence analyses of echoed methods}

We will analyze the data-echoing algorithms by separating their optimization properties from their stability properties. For the latter, we use the standard notion of uniform stability, as defined earlier. For the optimization part, we use a notion of potential-bounded regret, which we define next.

\begin{definition}[Potential-bounded regret]
\label{def:pbdregret}
We say that an algorithm $\A$ has \emph{potential-bounded regret} with potential function $V_{\A}$ if given a $\beta$ smooth convex function $f$ on a domain $\W$ and a starting point $w_{\init}$, $\A$ produces a point $w_{\out}$ such that for all $w^* \in \W$, it holds that
\[f(w_{\out}) - f(w^*) \leq V_{\A}(w_{\init}, s_{\init}, w^*) - V_{\A}(w_{\out}, s_{\out}, w^*).\]
\end{definition}

This inequality is a fundamental lemma in the standard analysis of mirror descent (see \cite{ben2001lectures}, or Section~B.2 from \cite{allen2014linear}), but we extend it to \emph{nested stateful algorithms} instead of a single step. For the echoed algorithms we analyze in this work, squared Euclidean norms will be suitable potentials.

We state and prove our main generic theorem below:
\begin{theorem}
\label{thm:main}
Let $\A$ be an $\epsilon$-uniformly stable algorithm. Furthermore, suppose $\A$ has the potential-bounded regret property with respect to $V_{\A}$. Then, for any $w^* \in \W$, Algorithm~\ref{alg:dataecho} with inner algorithm $\A$ satisfies
\begin{align*}
  \E[F(w_{\out})] - F(w^*)  \leq \frac{V_{\A}(w_0, s_0, w^*) - \E[V_{\A}(w_{T}, s_{T}, w^*)]}{T} + \epsilon.
\end{align*}
\end{theorem}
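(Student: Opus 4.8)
The plan is to apply the two hypotheses one outer step at a time and then telescope. Fix an arbitrary comparator $w^* \in \W$. At outer iteration $t$ of Algorithm~\ref{alg:dataecho}, the inner algorithm $\A$ is run on the empirical objective $\fBar_{\batch^{(t)}}$, which is an average of $\beta$-smooth convex functions and is therefore itself convex and $\beta$-smooth. Applying the potential-bounded regret property (\cref{def:pbdregret}) with $f = \fBar_{\batch^{(t)}}$, initial point $w_{\init} = w_t$, initial state $s_{\init} = s_t$, output $w_{\out} = w_{t+1}$, and output state $s_{\out} = s_{t+1}$ yields the \emph{deterministic} inequality
\[
  \fBar_{\batch^{(t)}}(w_{t+1}) - \fBar_{\batch^{(t)}}(w^*) \;\le\; V_{\A}(w_t, s_t, w^*) - V_{\A}(w_{t+1}, s_{t+1}, w^*),
\]
which holds for every realization of the batches. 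Summing over $t = 0, \dots, T-1$ makes the right-hand side telescope to $V_{\A}(w_0, s_0, w^*) - V_{\A}(w_T, s_T, w^*)$.

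Next I would take expectations over the batch randomness and handle the two terms on the left separately. Since $\batch^{(t)}$ is drawn i.i.d.~from $\D$ and $w^*$ is a fixed point, $\E[\fBar_{\batch^{(t)}}(w^*)] = F(w^*)$ exactly. The term requiring care is $\E[\fBar_{\batch^{(t)}}(w_{t+1})]$: because $w_{t+1} = \A(w_t, s_t, \batch^{(t)}, K)$ is a function of $\batch^{(t)}$, the empirical loss of $w_{t+1}$ on that same batch is optimistically biased, and this is precisely where uniform stability is used. Conditioning on the history $\batch^{(0)},\dots,\batch^{(t-1)}$ — which determines $(w_t, s_t)$ and is independent of $\batch^{(t)}$ — the map $\batch \mapsto \A(w_t, s_t, \batch, K)$ is a deterministic $\epsilon$-uniformly stable algorithm (\cref{def:stab}), so the generalization guarantee for uniformly stable algorithms stated above gives $\E[\fBar_{\batch^{(t)}}(w_{t+1})] \ge \E[F(w_{t+1})] - \epsilon$ for the corresponding conditional expectations; the tower property then removes the conditioning. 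Substituting into the telescoped sum gives
\[
  \sum_{t=0}^{T-1} \bigl(\E[F(w_{t+1})] - F(w^*)\bigr) \;\le\; V_{\A}(w_0, s_0, w^*) - \E[V_{\A}(w_T, s_T, w^*)] + T\epsilon .
\]

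Finally I would divide through by $T$ and apply Jensen's inequality to the convex population objective $F$ to move the averaging inside, producing $\E[F(w_{\out})] - F(w^*)$ on the left-hand side — exactly the claimed bound. I expect the stability step to be the only nontrivial point in the argument: one must take care to apply the stability-to-generalization bound to the correct ``frozen'' single-batch algorithm obtained by conditioning on the entire history preceding batch $t$, and to use that $\epsilon$-stability of $\A$ — as a map from a batch to its output, uniformly over the initial point $w_{\init}$ and state $s_{\init}$ — is exactly what the hypothesis supplies. The remainder is routine bookkeeping: the per-step inequality is deterministic, so summation commutes with taking expectations, and the telescoping of $V_{\A}$ is immediate from the form of \cref{def:pbdregret}.
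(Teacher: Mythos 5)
Your proposal is correct and follows essentially the same route as the paper's own proof: per-batch potential-bounded regret applied to $\fBar_{\batch^{(t)}}$, uniform stability applied conditionally on the history to convert empirical loss to population loss at a cost of $\epsilon$, telescoping of the potential, and Jensen's inequality on the averaged iterate. The one point you flag as requiring care — conditioning on $\batch^{(0)},\dots,\batch^{(t-1)}$ before invoking stability — is exactly how the paper handles it via the nested expectations $\E_{t-1}\E_{\batch^{(t)}}$.
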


\begin{proof}
From the potential-bounded regret property of the algorithm $\A$, we get that
\begin{align*}
  \fBar_{\batch^{(t)}}(w_{t+1}) - \fBar_{\batch^{(t)}}(w^*) \leq V_{\A}(w_{t}, s_{t}, w^*) - V_{\A}(w_{t+1}, s_{t+1}, w^*).
\end{align*}

Let $\E_t[\cdot]$ denote the expectation conditioned on all randomness in the minibatches up to (and including) time $t$. We now get from the uniform stability of $\A$ that
\begin{align*}
  \E[F(w_{t+1})] 
  = \E_{t-1} \E_{\batch^{(t)}}[F(w_{t+1})]
  \leq  \E_{t-1}\bra{\E_{\batch^{(t)}}[\fBar_{\batch^{(t)}}(w_{t+1})] + \epsilon}
  . 
\end{align*}
Thus we have
\begin{align*}
\E[F(w_{t+1})] - F(w^*) 
&\leq \E_{t-1} \E_{\batch^{(t)}}[\fBar_{\batch^{(t)}}(w_{t+1}) - \fBar_{\batch^{(t)}}(w^*)] + \epsilon \\
&\leq \E_{t-1} \E_{\batch^{(t)}}[V_{\A}(w_{t}, s_t, w^*)-V_{\A}(w_{t+1}, s_{t+1}, w^*)] + \epsilon \\
&\leq \E[V_{\A}(w_{t}, s_{t}, w^*)]-\E[V_{\A}(w_{t+1}, s_{t+1}, w^*)] + \epsilon.
\end{align*}
Summing the above over time and using the convexity of $F$ gives us that
\begin{align*}
  \E[F(w_{\out})] - F(w^*) &\leq \sum_{t=0}^{T-1} \frac{\E[F(w_{t+1})] - F(w^*) + \epsilon}{T} \\ &\leq  \frac{V_{\A}(w_0, s_0, w^*) - \E[V_{\A}(w_{T}, s_{T}, w^*)]}{T} + \epsilon
  .
  &&\qedhere
\end{align*}
\end{proof}



In the rest of the section, we present various applications of our main data echoing theorem. In each case, we will consider a standard algorithm, derive its stability and potential bounded regret properties, then use \cref{thm:main} to derive the convergence rate for its echoed version. All regret proofs can be found in Appendix~\ref{sec:regret-proofs}, and stability proofs in Appendix~\ref{sec:stability-proofs}; the corresponding convergence rates for the echoed algorithms are proven in Appendix~\ref{app:proofs-theorems}.

\subsection{Echoed gradient descent}

We begin by establishing the following properties of gradient descent. In the rest of the theorem and lemma statements in this section $w^*$ is an arbitrary point in $\W$.

\begin{lemma}[Potential-bounded regret for GD]
\label{lem:gdregret}
Let $f$ be a $\beta$-smooth convex function. Then $K$ steps of gradient descent on $f$, with a step size $\eta \leq 1/\beta$, satisfies the potential-bounded regret property with $V(w,s,w^*) := \frac12 \|w-w^*\|^2$:
\[ f(w_{\out}) - f(w^*) \leq \frac{1}{\eta K}\left(\frac{\|w_{\init}-w^*\|^2}{2}-\frac{\|w_{\out}-w^*\|^2}{2} \right).\]
\end{lemma}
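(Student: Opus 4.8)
The plan is to run the textbook analysis of smooth gradient descent, but to be careful that the inner algorithm $\A$ returns the \emph{last} iterate $w_K$ (not the average of the inner iterates), so I will need a per-step monotonicity argument in addition to the usual telescoping. Write $g_j := \nabla f(w_j)$, so that $w_{j+1} = w_j - \eta g_j$. I would establish three per-step facts. First, the smoothness descent lemma applied to the GD step: $f(w_{j+1}) \le f(w_j) - \eta\bigl(1 - \tfrac{\beta\eta}{2}\bigr)\|g_j\|^2 \le f(w_j) - \tfrac{\eta}{2}\|g_j\|^2$, where the last inequality uses $\eta \le 1/\beta$. Second, first-order convexity: $f(w_j) - f(w^*) \le \ang{g_j, w_j - w^*}$. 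Third, the ``law of cosines'' identity $\ang{g_j, w_j - w^*} = \tfrac{1}{\eta}\ang{w_j - w_{j+1}, w_j - w^*} = \tfrac{1}{2\eta}\bigl(\|w_j - w^*\|^2 - \|w_{j+1} - w^*\|^2 + \|w_j - w_{j+1}\|^2\bigr)$, together with $\tfrac{\eta}{2}\|g_j\|^2 = \tfrac{1}{2\eta}\|w_j - w_{j+1}\|^2$.

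Combining the first and second facts gives $f(w_{j+1}) - f(w^*) \le \ang{g_j, w_j - w^*} - \tfrac{\eta}{2}\|g_j\|^2$, and then substituting the third fact makes the $\|w_j - w_{j+1}\|^2$ terms cancel exactly, yielding the clean one-step potential inequality
\[
  f(w_{j+1}) - f(w^*) \;\le\; \frac{1}{2\eta}\Bigl(\|w_j - w^*\|^2 - \|w_{j+1} - w^*\|^2\Bigr).
\]
Summing this over $j = 0, \dots, K-1$ telescopes the right-hand side to $\tfrac{1}{2\eta}\bigl(\|w_{\init} - w^*\|^2 - \|w_{\out} - w^*\|^2\bigr)$, since $w_0 = w_{\init}$ and $w_K = w_{\out}$.

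The remaining step is to lower bound the left-hand sum $\sum_{j=0}^{K-1}\bigl(f(w_{j+1}) - f(w^*)\bigr)$ by $K\bigl(f(w_{\out}) - f(w^*)\bigr)$. This is where the monotonicity from the descent lemma is used: the first fact shows $f(w_{j+1}) \le f(w_j)$ for every $j$, hence $f(w_K) \le f(w_{j+1})$ for all $j \le K-1$, so each summand is at least $f(w_{\out}) - f(w^*)$. Dividing through by $K$ then gives exactly the claimed bound $f(w_{\out}) - f(w^*) \le \tfrac{1}{\eta K}\bigl(\tfrac12\|w_{\init} - w^*\|^2 - \tfrac12\|w_{\out} - w^*\|^2\bigr)$, which is the potential-bounded regret property with $V(w, s, w^*) = \tfrac12\|w - w^*\|^2$ (note the potential is independent of the state $s = \{\eta\}$).

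I do not anticipate a genuine obstacle here — every ingredient is classical. The only point requiring care, and the one I would flag as the ``main'' subtlety, is that the potential-bounded regret definition is stated for the \emph{output} point, while the natural telescoping argument controls the sum over inner iterates; converting between them cleanly is exactly what the $\eta \le 1/\beta$ monotonicity buys us, and it is also the reason the step-size restriction appears in the hypothesis.
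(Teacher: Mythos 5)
Your proposal is correct and follows essentially the same route as the paper's proof: convexity at $w_j$, the law-of-cosines expansion of $\ang{\nabla f(w_j), w_j - w^*}$, the smoothness descent lemma under $\eta \le 1/\beta$ to absorb the $\tfrac{\eta}{2}\|\nabla f(w_j)\|^2$ term, telescoping over the $K$ inner steps, and finally monotonicity of $f(w_j)$ to pass from the sum over iterates to the last iterate $w_{\out}$. The only cosmetic difference is that you cancel the cross term $\tfrac{1}{2\eta}\|w_j - w_{j+1}\|^2$ directly against $\tfrac{\eta}{2}\|g_j\|^2$, whereas the paper substitutes the gradient-norm bound $\|\nabla f(w_j)\|^2 \le \tfrac{2}{\eta}(f(w_j) - f(w_{j+1}))$; these are algebraically equivalent.
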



\begin{lemma}[Stability of GD]
\label{lem:stab-gd}
For a $\beta$-smooth function $f$, and any $0 \leq \eta \leq 1/\beta$, gradient descent on $f$, run with step size $\eta$ for $K$ steps, is $\epsilon$-uniformly stable with 
$\epsilon = 2\eta K \rho^2/B.$
\end{lemma}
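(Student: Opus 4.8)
The plan is to run the standard coupling argument for uniform stability of stochastic gradient methods due to \citet{hardt2015train}, specialized here to full gradient steps on the minibatch objective $\fBar_{\batch}$. Fix two batches $\batch,\batch'$ of size $B$ differing in exactly one example, say at index $i_0$, and let $(w_j)_{j=0}^{K}$ and $(w'_j)_{j=0}^{K}$ be the two gradient-descent trajectories produced by $\A$, both started from the common point $w_{\init}$ (the state carries only the learning rate $\eta$, which is identical). First I would track the coupled distance $\delta_j := \|w_j - w'_j\|$, noting $\delta_0 = 0$, and show that a single gradient step inflates it by at most $2\eta\rho/B$.

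For the per-step bound I would write
\[
  w_{j+1} - w'_{j+1} = \bigl(w_j - \eta\nabla\fBar_{\batch}(w_j)\bigr) - \bigl(w'_j - \eta\nabla\fBar_{\batch}(w'_j)\bigr) + \eta\bigl(\nabla\fBar_{\batch}(w'_j) - \nabla\fBar_{\batch'}(w'_j)\bigr),
\]
i.e.\ first pretend both iterates are updated with the \emph{same} objective $\fBar_{\batch}$, then correct. The first bracketed difference is the image of $w_j$ and $w'_j$ under the gradient-step map of the fixed function $\fBar_{\batch}$, which is convex and $\beta$-smooth as an average of such functions; since $\eta \le 1/\beta \le 2/\beta$, this map is nonexpansive, so its contribution is at most $\delta_j$. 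The correction term is controlled because $\fBar_{\batch}$ and $\fBar_{\batch'}$ agree on all but the $i_0$-th summand, so $\nabla\fBar_{\batch}(w'_j) - \nabla\fBar_{\batch'}(w'_j) = \tfrac1B\bigl(\nabla f(w'_j,\xi^{(i_0)}) - \nabla f(w'_j,\xi'^{(i_0)})\bigr)$, which has norm at most $2\rho/B$ by $\rho$-Lipschitzness of each $f(\cdot,\xi)$. The triangle inequality then yields $\delta_{j+1} \le \delta_j + 2\eta\rho/B$.

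Unrolling this recursion from $\delta_0 = 0$ over $K$ steps gives $\delta_K \le 2\eta K\rho/B$, and a final application of $\rho$-Lipschitzness of $f(\cdot,\xi)$ for an arbitrary test point $\xi$ gives $|f(w_K,\xi) - f(w'_K,\xi)| \le \rho\,\delta_K \le 2\eta K\rho^2/B$, which is exactly the claimed value of $\epsilon$ after taking the supremum over $\xi$. The only genuine technical ingredient — and the step I would be most careful about — is the nonexpansiveness of the gradient-step map for convex $\beta$-smooth functions when $\eta \le 2/\beta$; this follows from the co-coercivity inequality $\langle \nabla g(x)-\nabla g(y),\, x-y\rangle \ge \tfrac1\beta\|\nabla g(x)-\nabla g(y)\|^2$, which I would either cite or derive in one line by expanding $\|x-y-\eta(\nabla g(x)-\nabla g(y))\|^2$. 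Everything else is a telescoping recursion and the triangle inequality.
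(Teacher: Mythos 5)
Your proof is correct and follows essentially the same route as the paper: couple the two trajectories, show each step is nonexpansive on the shared part of the objective plus a perturbation of norm at most $2\eta\rho/B$ from the single differing example, unroll, and finish with $\rho$-Lipschitzness. The only difference is that you establish nonexpansiveness of the gradient-step map via co-coercivity (valid for any convex $\beta$-smooth $f$ with $\eta\le 2/\beta$), whereas the paper invokes the mean value theorem on the Hessian under an added continuity assumption, deferring the general case to \citet{hardt2015train} --- so your argument is, if anything, marginally cleaner; the sign of your correction term is flipped, but this is immaterial since only its norm is used.
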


Combining \cref{lem:gdregret,lem:stab-gd}, we conclude the following convergence bound for data-echoed GD:

\begin{theorem}[Data-echoed GD]
\label{thm:degd}
$T$ outer steps of data-echoed gradient descent, with a step size of $\eta = \min\braces{ \frac{1}{\beta}, \frac{\rho}{KD}\sqrt{\frac{B}{T}} }$ and $K$ internal steps, produces a point $w_{out}$ satisfying
\[\E[F(w_{\out})] - F(w^*) \leq \frac{\beta D^2}{2KT} + \frac{2\rho D}{\sqrt{BT}}.\]
\end{theorem}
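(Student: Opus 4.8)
The plan is to obtain Theorem~\ref{thm:degd} as essentially a one-line consequence of the generic bound in Theorem~\ref{thm:main}, once the two structural ingredients for gradient descent are in place. First I would fix the inner algorithm $\A$ to be $K$ steps of gradient descent with step size $\eta\le 1/\beta$. Lemma~\ref{lem:gdregret} supplies the potential-bounded-regret property: reading its conclusion against Definition~\ref{def:pbdregret} (which demands the inequality with \emph{no} extra prefactor), the potential one should actually feed in is $V_{\A}(w,s,w^*)=\frac{1}{2\eta K}\|w-w^*\|^2$, which is nonnegative. Lemma~\ref{lem:stab-gd} supplies $\epsilon$-uniform stability with $\epsilon = 2\eta K\rho^2/B$.

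Next I would plug both facts into Theorem~\ref{thm:main}. Since $V_{\A}\ge 0$, the term $-\E[V_{\A}(w_T,s_T,w^*)]$ only helps and can be dropped, and using $\|w_0-w^*\|\le D$ this gives
\[
  \E[F(w_{\out})] - F(w^*)\;\le\;\frac{D^2}{2\eta KT}\;+\;\frac{2\eta K\rho^2}{B}.
\]
This is precisely the promised bias/variance decomposition: the first term is the curvature term, shrinking with the total step count $KT$; the second is the stability penalty, which grows with $\eta K$. So echoing more aggressively (larger $K$) helps the first term but, absent a proximal regularizer, hurts the second unless $\eta$ is scaled down accordingly.

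Finally I would choose $\eta$ to trade these off. The right-hand side has the elementary form $a/\eta+b\eta$ with $a=D^2/(2KT)$ and $b=2K\rho^2/B$, to be minimized over $\eta\in(0,1/\beta]$. If the unconstrained minimizer (which equalizes the two terms) is $\le 1/\beta$, take it: both terms become of order $\rho D/\sqrt{BT}$, and the curvature term $\beta D^2/(2KT)$ in the claimed bound is then mere slack. Otherwise take $\eta=1/\beta$: the first term is exactly $\beta D^2/(2KT)$, and because $1/\beta$ lies below the balancing point the second term is dominated by $2\rho D/\sqrt{BT}$. In both regimes the stated bound $\frac{\beta D^2}{2KT}+\frac{2\rho D}{\sqrt{BT}}$ holds, which is exactly the role played by the $\min\{\cdot\}$ in the prescribed step size.

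I do not anticipate a genuine obstacle here: all the substance lives in Lemmas~\ref{lem:gdregret} and~\ref{lem:stab-gd} (assumed) and in Theorem~\ref{thm:main}. The only points requiring care are (i) normalizing the potential from Lemma~\ref{lem:gdregret} into the prefactor-free form of Definition~\ref{def:pbdregret}, and (ii) the short case split when clipping $\eta$ at $1/\beta$, together with tracking the numerical constants so that the final bound comes out with the factors $\tfrac12$ and $2$.
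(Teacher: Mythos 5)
Your proposal is correct and follows essentially the same route as the paper's proof: substitute Lemma~\ref{lem:gdregret} and Lemma~\ref{lem:stab-gd} into Theorem~\ref{thm:main} to obtain $\E[F(w_{\out})]-F(w^*)\le \frac{D^2}{2\eta KT}+\frac{2\eta K\rho^2}{B}$, then choose $\eta$ by balancing the two terms subject to the clip at $1/\beta$. Your extra care in normalizing the potential to the prefactor-free form of Definition~\ref{def:pbdregret} and in spelling out the case split is only more explicit than the paper's one-line derivation (and correctly identifies the balancing step size $\tfrac{D}{K\rho}\sqrt{B/T}$, which is what actually produces the $2\rho D/\sqrt{BT}$ term).
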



\subsection{Echoed proximal gradient descent}
For proximal GD, we derive the following bounds on potential-bounded regret and stability:

\begin{lemma}
\label{lem:pgdregret}
Let $f$ be a $\beta$-smooth convex function. Consider the potential function
\[V(w,\{\eta,\gamma,w_{\pivot}\},w^*) = \frac{\|w-w^*\|^2}{2 \eta K} + \frac{ \gamma \|w - w_{\pivot} \|^2}{2}.\] Then
$K$-step proximal gradient descent, with step-size $\eta \leq 1/(\beta +
\gamma)$ has regret bounded by
\[ 
  f(w_{\out}) - f(w^*) \leq V(w_{\out}, s_{\out}, w^*) - V(w_{\init}, s_{\init}, w^*)
  .
\]
\end{lemma}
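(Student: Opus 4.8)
My plan is the standard Euclidean ``mirror-descent'' potential argument, but applied to the \emph{prox-regularized} objective $g \defeq \fBar_{\mathrm{prox}} = f + \tfrac{\gamma}{2}\norm{\cdot - w_{\pivot}}^2$ rather than to $f$ directly, followed by stripping off the regularizer to recover a statement about $f$ alone. The one nonstandard ingredient is that the stated potential $V$ carries the extra quadratic $\tfrac{\gamma}{2}\norm{w - w_{\pivot}}^2$, so the regularizer discrepancies generated along the run must be folded back into $V$ at the end. Since $f$ is $\beta$-smooth and the added quadratic is $\gamma$-smooth, $g$ is $(\beta+\gamma)$-smooth, which is exactly why the hypothesis $\eta \le 1/(\beta+\gamma)$ is the correct one for the inner steps $w_{j+1} = w_j - \eta\nabla g(w_j)$.

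First I would prove the one-step inequality for the inner iterates $w_0 = w_{\init}, \dots, w_K = w_{\out}$. Combining the smoothness descent $g(w_{j+1}) \le g(w_j) - \tfrac{1}{2\eta}\norm{w_{j+1}-w_j}^2$ (valid because $\eta \le 1/(\beta+\gamma)$) with convexity of $g$ and the polarization identity $\ang{a}{b} = \tfrac12(\norm{a}^2 + \norm{b}^2 - \norm{a-b}^2)$ applied to the gradient step yields the clean bound
\[ g(w_{j+1}) - g(w^*) \le \tfrac{1}{2\eta}\lr{\norm{w_j - w^*}^2 - \norm{w_{j+1} - w^*}^2}. \]
Expanding $g = f + \tfrac{\gamma}{2}\norm{\cdot - w_{\pivot}}^2$ and rearranging converts this into a bound on $f(w_{j+1}) - f(w^*)$ plus the regularizer discrepancy $\tfrac{\gamma}{2}\norm{w^* - w_{\pivot}}^2 - \tfrac{\gamma}{2}\norm{w_{j+1} - w_{\pivot}}^2$.

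Next I would sum over $j = 0, \dots, K-1$: the leading terms telescope to $\tfrac{1}{2\eta}(\norm{w_{\init}-w^*}^2 - \norm{w_{\out}-w^*}^2)$, which after dividing by $K$ is precisely the first component $\tfrac{1}{2\eta K}\norm{\cdot - w^*}^2$ of $V$. To pass from the per-iterate function values to a single point I would invoke convexity of $f$ (and of $\norm{\cdot - w_{\pivot}}^2$) through Jensen's inequality together with the descent property of $g$. This is exactly where the procedure's choice to set the new pivot $w_{\pivot}^{\out} = \tfrac1K\sum_j w_j$ to the average iterate earns its keep: it lets the collection of quadratic regularizer terms accumulated during the inner run be repackaged into the single pivot term of $V$ evaluated in the output state $s_{\out}$. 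I expect this repackaging to be the main obstacle: the regularizer is centered at the \emph{input} pivot throughout the run, whereas $V(\cdot, s_{\out}, \cdot)$ is centered at the \emph{output} pivot, and making the quadratic terms \emph{cancel} (rather than merely bound one another) while correctly tracking the indices $w_0,\dots,w_{K-1}$ versus $w_1,\dots,w_K$ and the three distinct centers $w^*$, $w_{\out}$, $w_{\pivot}^{\out}$ is where the delicate bookkeeping lives.

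Finally, a caveat on orientation. Carrying the argument through produces a potential \emph{decrement}, namely $f(w_{\out}) - f(w^*) \le V(w_{\init}, s_{\init}, w^*) - V(w_{\out}, s_{\out}, w^*)$; this is the only direction consistent with \cref{def:pbdregret}, with the analogous \cref{lem:gdregret}, and with the step $V(w_t,s_t,w^*) - V(w_{t+1},s_{t+1},w^*)$ that telescopes inside the proof of \cref{thm:main}. The inequality as printed has the two potential values in the reverse order; I read this as a transposition typo and would record the lemma with the decrement orientation, since only that orientation can be substituted into \cref{thm:main} to yield the claimed $\tfrac{\beta D^2}{KT}$ rate.
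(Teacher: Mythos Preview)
Your plan is close to the paper's in spirit, but it misses the one ingredient that makes the ``repackaging'' you worry about actually go through: you invoke only \emph{convexity} of $g = f + \tfrac{\gamma}{2}\norm{\cdot - w_{\pivot}}^2$, whereas the paper uses its $\gamma$-\emph{strong} convexity. With convexity alone, your one-step bound
\[
g(w_{j+1}) - g(w^*) \le \tfrac{1}{2\eta}\lr{\norm{w_j - w^*}^2 - \norm{w_{j+1} - w^*}^2}
\]
leaves, after expanding $g$, residual quadratics $\tfrac{\gamma}{2}\norm{w^* - w_{\pivot}^{\init}}^2 - \tfrac{\gamma}{2}\norm{w_{j+1} - w_{\pivot}^{\init}}^2$ that are centered at the \emph{old} pivot. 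No Jensen or index-shift argument recenters these at $w^*$ or at the \emph{new} pivot $w_{\pivot}^{\out} = \tfrac{1}{K}\sum_{j=0}^{K-1} w_j$; the obstacle you anticipate is real, and with only convexity of $g$ it cannot be overcome.

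The paper's fix is to replace the convexity step by $g(w_j) - g(w^*) \le \ang{\nabla g(w_j),\, w_j - w^*} - \tfrac{\gamma}{2}\norm{w_j - w^*}^2$, which after the same smoothness descent, summing, and telescoping yields
\[
\frac{1}{K}\sum_{j=0}^{K-1}\lr{g(w_{j+1}) - g(w^*)} \le \frac{\norm{w_{\init} - w^*}^2 - \norm{w_{\out} - w^*}^2}{2\eta K} - \frac{\gamma}{2K}\sum_{j=0}^{K-1}\norm{w_j - w^*}^2.
\]
Now the negative quadratics are centered at $w^*$, and Jensen collapses them exactly to $-\tfrac{\gamma}{2}\norm{w_{\pivot}^{\out} - w^*}^2$. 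Stripping the regularizer from the left side (and dropping the nonnegative $\tfrac{\gamma}{2}\norm{w_{\out} - w_{\pivot}^{\init}}^2$) produces $+\tfrac{\gamma}{2}\norm{w^* - w_{\pivot}^{\init}}^2$, and both pivot pieces now telescope across outer rounds.

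Two notes on the statement itself. You are right that the printed inequality has the potentials in the wrong order; the proof (and \cref{def:pbdregret}) require $V(w_{\init}, s_{\init}, w^*) - V(w_{\out}, s_{\out}, w^*)$ on the right. In addition, the quadratic piece of $V$ that the paper's argument actually establishes is $\tfrac{\gamma}{2}\norm{w^* - w_{\pivot}}^2$, not $\tfrac{\gamma}{2}\norm{w - w_{\pivot}}^2$ as printed; this is also the form that feeds into the $\tfrac{\gamma \norm{w_{\init} - w^*}^2}{2T}$ term appearing in the proof of \cref{thm:depgd}.
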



\begin{lemma}[Stability of prox-GD]
\label{lem:stab-pgd}
For a $\beta$-smooth function $f$, any $\lambda \geq 0$ and any $0 \leq \eta \leq 1/(\beta+\lambda)$, $K$ steps of proximal gradient descent are $\epsilon$-uniformly stable with 
$
  \epsilon 
  = \frac{2\rho^2}{B \gamma} \brk!{ 1 - (1 - \eta \gamma)^K }
  .$
\end{lemma}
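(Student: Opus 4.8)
The plan is to track how a single-coordinate perturbation in the data propagates through the $K$ proximal gradient steps, exactly mirroring the classical growth-rate argument for stability of GD (Lemma~\ref{lem:stab-gd}) but now with the extra contraction supplied by the prox term. Concretely, fix two batches $\batch,\batch'$ differing in one example, and run proximal GD on $\fBar_{\mathrm{prox}}(w) = \fBar_{\batch}(w) + \tfrac{\gamma}{2}\|w-w_{\pivot}\|^2$ and on the analogous $\fBar'_{\mathrm{prox}}$ from the same initialization $w_{\init}$ (sharing the same $\eta,\gamma,w_{\pivot}$ in the state). Let $\delta_j = \|w_j - w'_j\|$ denote the distance between the two trajectories after $j$ inner steps. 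I would bound the one-step recursion: the prox-regularized objective is $(\beta+\gamma)$-smooth and $\gamma$-strongly convex, so the update map $w \mapsto w - \eta\nabla\fBar_{\mathrm{prox}}(w)$ is a contraction with factor $(1-\eta\gamma)$ whenever $\eta \le 1/(\beta+\gamma)$ — this is the standard fact that a gradient step on an $m$-strongly-convex, $L$-smooth function with $\eta\le 1/L$ contracts distances by $(1-\eta m)$. On the single differing coordinate, the two gradient fields $\nabla\fBar_{\mathrm{prox}}$ and $\nabla\fBar'_{\mathrm{prox}}$ differ, at the point where they are evaluated, by at most $\tfrac{2\rho}{B}$ in norm (one term out of $B$ in the average is swapped, and $f(\cdot,\xi)$ is $\rho$-Lipschitz so its gradient has norm $\le\rho$, giving a difference $\le \tfrac{1}{B}(\rho+\rho)$). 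Hence
\[
  \delta_{j+1} \leq (1-\eta\gamma)\,\delta_j + \frac{2\eta\rho}{B}.
\]

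Solving this linear recursion with $\delta_0 = 0$ gives $\delta_K \le \tfrac{2\eta\rho}{B}\sum_{i=0}^{K-1}(1-\eta\gamma)^i = \tfrac{2\eta\rho}{B}\cdot\tfrac{1-(1-\eta\gamma)^K}{\eta\gamma} = \tfrac{2\rho}{B\gamma}\bigl(1-(1-\eta\gamma)^K\bigr)$. Then I would convert this parameter-distance bound into the uniform-stability bound on loss values: for any test point $\xi$, $|f(w_K,\xi)-f(w'_K,\xi)| \le \rho\,\|w_K - w'_K\| = \rho\,\delta_K \le \tfrac{2\rho^2}{B\gamma}\bigl(1-(1-\eta\gamma)^K\bigr)$, which is exactly the claimed $\epsilon$. (One should note the geometric series is valid since $0 \le 1-\eta\gamma < 1$ when $\gamma>0$; the degenerate case $\gamma=0$ is handled by the limit $\tfrac{1}{\gamma}(1-(1-\eta\gamma)^K)\to \eta K$, recovering Lemma~\ref{lem:stab-gd}, so the statement is continuous and can be stated for $\lambda\ge 0$ as written — here I'm treating the paper's $\lambda$ and $\gamma$ as the same prox parameter.)

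The main obstacle is establishing the one-step contraction factor $(1-\eta\gamma)$ cleanly, i.e. verifying that the perturbed trajectory indeed sees an $\eta\gamma$-contraction at every step despite the two objectives being different. The subtlety is that we are not iterating a single contraction map but comparing two nearby maps; the correct decomposition is $w_{j+1} - w'_{j+1} = \bigl(G(w_j) - G(w'_j)\bigr) + \bigl(G'(w'_j) - G(w'_j)\bigr)$ where $G,G'$ are the two update maps — the first bracket is $\le (1-\eta\gamma)\delta_j$ by co-coercivity/contraction of the single map $G$ (which uses $\eta\le 1/(\beta+\gamma)$ and $\gamma$-strong convexity of $\fBar_{\mathrm{prox}}$), and the second bracket is $\eta\|\nabla\fBar_{\mathrm{prox}}(w'_j)-\nabla\fBar'_{\mathrm{prox}}(w'_j)\| = \eta\|\nabla\fBar_{\batch}(w'_j)-\nabla\fBar_{\batch'}(w'_j)\| \le \tfrac{2\eta\rho}{B}$ since the prox terms are identical and cancel. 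Everything else — solving the recursion, taking the geometric sum, and the final Lipschitz conversion — is routine. A minor point worth checking is that the shared state ($\eta,\gamma,w_{\pivot}$) is genuinely identical across the two runs within a single call to $\A$, which it is, since $w_{\pivot}$ is inherited from the previous outer step and the stability definition fixes the input; the averaged-iterate update to $w_{\pivot}$ only matters for the outer recursion handled separately in the convergence theorem, not here.
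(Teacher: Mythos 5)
Your proposal is correct and follows essentially the same route as the paper: a one-step recursion $\delta_{j+1}\le(1-\eta\gamma)\delta_j+\tfrac{2\eta\rho}{B}$, unrolled into the geometric sum $\tfrac{2\rho}{B\gamma}\bigl(1-(1-\eta\gamma)^K\bigr)$, then converted to a loss bound via $\rho$-Lipschitzness. The only cosmetic difference is that the paper obtains the contraction factor by writing the gradient difference as $H_j\Delta w_j$ via the mean value theorem and bounding $0\preceq I-\eta H_j\preceq(1-\eta\gamma)I$, whereas you invoke the equivalent standard contraction property of the gradient map of a $\gamma$-strongly convex, $(\beta+\gamma)$-smooth function.
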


The proofs of both lemmas are included in the \cref{sec:stability-proofs}. Combining
\cref{lem:pgdregret,lem:stab-pgd}, we get the following guarantee on the
performance of data-echoed prox-GD (proof included in \cref{app:proofs-theorems}):

\begin{theorem}[Data-echoed prox-GD]
\label{thm:depgd}
$T$ outer steps of echoed gradient descent, with a prox parameter of $\gamma = \frac{\rho}{D}\sqrt{\frac{T}{B}}$, step size $\eta = \frac{1}{\beta+\gamma}$, and $K$ internal steps, produces a point $w_{\out}$ satisfying 
\begin{align*}
  \E[F(w_{\out})] - F(w^*)
  \leq 
  \sqrt{1 + \frac{1}{K}} \cdot \frac{ 2\rho \|w_{\init} - w^*\|}{\sqrt{BT}} + \frac{\beta \|w_{\init} - w^*\|^2}{2KT}
  .
\end{align*}
\end{theorem}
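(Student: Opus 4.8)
The plan is to instantiate the generic bound of \cref{thm:main} with the potential-bounded regret property of \cref{lem:pgdregret} and the stability bound of \cref{lem:stab-pgd}, and then choose $\gamma$ and $\eta$ to balance the resulting terms. First I would verify that with $\eta = 1/(\beta+\gamma)$ the prerequisite $\eta \le 1/(\beta+\gamma)$ of both lemmas holds with equality, so both are applicable. Plugging the potential $V(w,\{\eta,\gamma,w_{\pivot}\},w^*) = \frac{\|w-w^*\|^2}{2\eta K} + \frac{\gamma\|w-w_{\pivot}\|^2}{2}$ into \cref{thm:main}, the telescoped term across the $T$ outer iterations becomes $\frac{1}{T}\bigl(V_{\A}(w_0,s_0,w^*) - \E[V_{\A}(w_T,s_T,w^*)]\bigr)$. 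The subtlety here is that the state $s_t = \{\eta,\gamma,w_{\pivot}^{(t)}\}$ changes across outer iterations because the pivot is updated to the running average of inner iterates, so the potential is genuinely stateful and I must check that the telescoping in the proof of \cref{thm:main} still goes through — it does, since \cref{thm:main} is stated for arbitrary stateful $\A$ and the regret lemma is exactly of the form required. Dropping the nonnegative $\E[V_{\A}(w_T,s_T,w^*)]$ term and using $\|w_0 - w_{\pivot}^{(0)}\| = 0$ (the pivot is initialized at $w_0$), the potential contribution is at most $\frac{1}{T}\cdot\frac{\|w_{\init}-w^*\|^2}{2\eta K} = \frac{(\beta+\gamma)\|w_{\init}-w^*\|^2}{2KT}$.

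Next I would bound the stability term $\epsilon = \frac{2\rho^2}{B\gamma}\bigl(1-(1-\eta\gamma)^K\bigr) \le \frac{2\rho^2}{B\gamma}$, simply using $1-(1-\eta\gamma)^K \le 1$. Combining, $\E[F(w_{\out})] - F(w^*) \le \frac{(\beta+\gamma)\|w_{\init}-w^*\|^2}{2KT} + \frac{2\rho^2}{B\gamma}$. Splitting $\beta+\gamma$ gives $\frac{\beta\|w_{\init}-w^*\|^2}{2KT} + \frac{\gamma\|w_{\init}-w^*\|^2}{2KT} + \frac{2\rho^2}{B\gamma}$. Now I would optimize over $\gamma$ the sum of the two $\gamma$-dependent pieces; this is of the form $a\gamma + b/\gamma$ with $a = \frac{\|w_{\init}-w^*\|^2}{2KT}$ and $b = \frac{2\rho^2}{B}$, minimized at $\gamma^\star = \sqrt{b/a}$, yielding value $2\sqrt{ab} = 2\sqrt{\frac{\rho^2\|w_{\init}-w^*\|^2}{KTB}} = \frac{2\rho\|w_{\init}-w^*\|}{\sqrt{KTB}}$. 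However the theorem statement uses $\gamma = \frac{\rho}{D}\sqrt{T/B}$ (with $D \ge \|w_{\init}-w^*\|$), not the $K$-dependent optimum, so I would instead just substitute this specific $\gamma$: the term $\frac{\gamma\|w_{\init}-w^*\|^2}{2KT}$ becomes $\le \frac{\rho\|w_{\init}-w^*\|}{2K\sqrt{BT}}$, and $\frac{2\rho^2}{B\gamma} = \frac{2\rho D}{\sqrt{BT}}$; adding the bias term $\frac{\beta\|w_{\init}-w^*\|^2}{2KT}$ and factoring $\frac{2\rho\|w_{\init}-w^*\|}{\sqrt{BT}}$ out of the two statistical pieces gives the claimed $\sqrt{1+\frac1K}\cdot\frac{2\rho\|w_{\init}-w^*\|}{\sqrt{BT}} + \frac{\beta\|w_{\init}-w^*\|^2}{2KT}$ — here one checks $1 + \frac{1}{4K} \le \sqrt{1+\frac1K}$ is \emph{false}, so more likely the intended bookkeeping uses $D$ as the bound on $\|w_{\init}-w^*\|$ in the $\gamma$ term and a slightly looser combination; I would track the constants carefully to land exactly on the stated form, possibly by writing $\frac{2\rho D}{\sqrt{BT}} + \frac{\rho \|w_{\init}-w^*\|^2}{2KD\sqrt{BT}}\cdot\frac{D}{\|w_{\init}-w^*\|}$ and then bounding.

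The main obstacle I anticipate is precisely this last constant-chasing step: getting the two statistical contributions to collapse into the clean $\sqrt{1+1/K}$ factor requires being careful about whether $D$ or $\|w_{\init}-w^*\|$ appears in each place, and about the exact form of the stability bound (whether one keeps the $1-(1-\eta\gamma)^K$ factor or crudely bounds it by $1$ — keeping it may actually be what produces the $\sqrt{1+1/K}$ rather than a looser expression). The potential-telescoping and the $a\gamma + b/\gamma$ optimization are routine; the delicate part is reconciling the stated parameter choice $\gamma = \frac{\rho}{D}\sqrt{T/B}$, $\eta = \frac{1}{\beta+\gamma}$ with the precise constants in the conclusion, which I would do by substituting and simplifying term-by-term rather than by a clean optimization argument.
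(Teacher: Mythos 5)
Your overall route is exactly the paper's: substitute \cref{lem:pgdregret} and \cref{lem:stab-pgd} into \cref{thm:main}, bound the stability term by $\frac{2\rho^2}{B\gamma}$, and balance the resulting $\gamma$-dependent pieces. The one genuine gap is in how you evaluate the initial potential. You read the potential literally from the statement of \cref{lem:pgdregret} as containing $\frac{\gamma}{2}\|w-w_{\pivot}\|^2$ and conclude that the prox contribution vanishes at $t=0$ since $w_{\pivot}^{(0)}=w_0$. But the proof of \cref{lem:pgdregret} actually establishes the decrement for the potential $\frac{\|w-w^*\|^2}{2\eta K}+\frac{\gamma}{2}\|w_{\pivot}-w^*\|^2$: the incoming term $\frac{\gamma}{2}\|w^*-s_{\init}\|^2$ arises from expanding $f_{\mathrm{prox}}(w^*)$, and the outgoing term $\frac{\gamma}{2}\|s_{\out}-w^*\|^2$ from Jensen applied to the strong-convexity terms; the lemma statement has a typo. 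Consequently the initial potential contributes $\frac{\gamma\|w_0-w^*\|^2}{2T}$, which under $\gamma=\frac{\rho}{D}\sqrt{T/B}$ is a term of order $\frac{\rho\|w_0-w^*\|}{2\sqrt{BT}}$ with \emph{no} $1/K$ suppression. This is precisely the term your accounting omits, and it is the dominant $\gamma$-dependent statistical cost: the paper's starting point is
\begin{align*}
  \E[F(w_{\out})]-F(w^*) \leq \frac{2\rho^2}{B\gamma}\big(1-(1-\eta\gamma)^K\big)+\frac{\gamma\|w_0-w^*\|^2}{2T}+\frac{\|w_0-w^*\|^2}{2\eta KT},
\end{align*}
and the $\sqrt{1+1/K}$ constant is exactly the AM--GM value $2\sqrt{ab}$ obtained from balancing $b/\gamma$ with $b=\frac{2\rho^2}{B}$ against $a\gamma$ with $a=\frac{\|w_0-w^*\|^2}{2T}\left(1+\frac{1}{K}\right)$, the $\frac1K$ coming from the $\gamma$-part of $\frac{1}{2\eta KT}$ after substituting $\eta=\frac{1}{\beta+\gamma}$.

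Two smaller points. Your side-claim that $1+\frac{1}{4K}\le\sqrt{1+1/K}$ is false is itself wrong (squaring shows it holds for all $K\ge 1/8$), so the reconciliation you abandoned would in fact have closed for the too-small set of terms you were carrying. Separately, your suspicion that term-by-term substitution of the stated $\gamma=\frac{\rho}{D}\sqrt{T/B}$ does not land exactly on the stated constant is well founded even for the correct three-term bound: that $\gamma$ is not the AM--GM minimizer, and plugging it in yields $\frac{\rho D}{\sqrt{BT}}\left(2+\frac12+\frac{1}{2K}\right)$ for the statistical part, which exceeds $2\sqrt{1+1/K}\cdot\frac{\rho D}{\sqrt{BT}}$ for small $K$. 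The clean constant in the theorem is the optimized value, so the balancing argument (not direct substitution) is the step that produces it.
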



Note that using this algorithm, the correct choice of step size $\eta$ no
longer depends on the echoing factor $K$. In fact, even if $K$ varies across
the execution of the proximal algorithm, a straightforward extension of our
analysis shows that proximal gradient descent can achieve $\sum_t K_t$ instead
of the $KT$ factor in the denominator of the bias term. This resilience to
indeterminate echoing factors is especially appealing for the motivating setting of asynchronous pipelines.

\subsection{Echoed accelerated gradient descent}

For the case of Nesterov's accelerated gradient descent, we consider a slightly modified version of our data-echoing meta-procedure. This arises from the fact that even the stochastic setting of accelerated gradient descent, algorithms output the final iterate and not the average iterate. The resulting slightly modified procedure is outlined in
\cref{alg:dataecho2}.

\begin{algorithm}[h!]
\caption{Data-echoing meta-algorithm (final iterate)}
\label{alg:dataecho2}
\begin{algorithmic}[1]
\State \textbf{Input: } Optimizer $\A$; initializer $w_{\init} := w_0$; initial state $s_{\init} := s_0$; number of inner steps $K$.
\For{$t = 0, \ldots, T-1$}
  \State Receive a batch of examples $\batch^{(t)} = \{\xi^{(t,i)}\}_{i = 1}^{B}$.
  \State Execute $\A$ on $\batch^{(t)}$ starting at $w_t$ for $K$ steps:
  \quad
  $w_{t+1},s_{t+1} \leftarrow \A(w_{t}, s_t, \batch^{(t)}, K_t).$
 \EndFor
\State \textbf{Output: } Final iterate $w_{\out} := w_T$
\end{algorithmic}
\end{algorithm}

We also add a slight extension to our potential-based regret abstraction:
\begin{lemma}[Potential-bounded regret for AGD]
\label{lem:nagdregret}
Let $f$ be a $\beta$-smooth convex function. Running accelerated gradient descent for $K$ steps, with a step size $\eta \leq 1/\beta$, gives the regret bound
\begin{align*}
  &(\lambda_{\out}^2 - \lambda_{\out}) (f(w_{\out}) - f(w)) 
  - (\lambda_{\init}^2 - \lambda_{\init})(f(w_{\init}) - f(w))
  \\
  &\;\;\;
  \leq 
  \frac{1}{2\eta}(\|w_{\init} + \lambda_{\init} d_{\init} - w\|^2
   - \|w_{\out} + \lambda_{\out} d_{\out} - w\|^2 )
  .
\end{align*}
\end{lemma}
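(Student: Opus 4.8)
The plan is a potential-function (estimate-sequence) argument for Nesterov's method, adapted to the stateful notation used here. First I would introduce two auxiliary sequences: the extrapolation point $y_j := w_j + d_j$ at which the gradient is evaluated, and the ``dual'' point $z_j := w_j + \lambda_j d_j$, chosen so that $z_0 = w_{\init} + \lambda_{\init} d_{\init}$ and $z_K = w_{\out} + \lambda_{\out} d_{\out}$ are precisely the vectors appearing on the right-hand side of the lemma. Using $d_{j+1} = \gamma_{j+1}(w_{j+1} - w_j)$ together with $\lambda_{j+1}\gamma_{j+1} = \lambda_j - 1$ (immediate from $\gamma_{j+1} = (\lambda_j - 1)/\lambda_{j+1}$), I would record the identities $z_j = \lambda_j y_j - (\lambda_j - 1)w_j$ and $z_{j+1} = \lambda_j w_{j+1} - (\lambda_j - 1)w_j$; subtracting these and inserting $w_{j+1} = y_j - \eta\nabla f(y_j)$ gives the clean recursion $z_{j+1} = z_j - \eta\lambda_j \nabla f(y_j)$, i.e. the dual variable takes a $\lambda_j$-scaled gradient step at $y_j$.

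Next I would define the potential $\Phi_j := \eta(\lambda_j^2 - \lambda_j)(f(w_j) - f(w)) + \tfrac12\|z_j - w\|^2$ and show it is nonincreasing, $\Phi_{j+1} \le \Phi_j$. Telescoping from $j = 0$ to $K-1$, dividing by $\eta$, and recalling the scalar recursion $\lambda_{j+1}^2 - \lambda_{j+1} = \lambda_j^2$ (so that the $f$-coefficient in $\Phi_j$ equals $\lambda_j^2 - \lambda_j$) yields the stated inequality with $\lambda_{\init} = \lambda_0$, $\lambda_{\out} = \lambda_K$. To establish the one-step decrease I would: (i) expand $\tfrac12\|z_{j+1} - w\|^2 - \tfrac12\|z_j - w\|^2 = -\eta\lambda_j\langle\nabla f(y_j), z_j - w\rangle + \tfrac{\eta^2\lambda_j^2}{2}\|\nabla f(y_j)\|^2$ using the recursion for $z$; (ii) apply the smoothness descent lemma $f(w_{j+1}) \le f(y_j) - \tfrac{\eta}{2}\|\nabla f(y_j)\|^2$ (valid since $\eta \le 1/\beta$) to the term $\eta\lambda_j^2 f(w_{j+1})$ coming from $\lambda_{j+1}^2 - \lambda_{j+1} = \lambda_j^2$, so that the two $\|\nabla f(y_j)\|^2$ contributions cancel exactly; (iii) substitute $z_j - w = \lambda_j(y_j - w) - (\lambda_j - 1)(w_j - w)$ and bound the resulting inner products by convexity, $\langle\nabla f(y_j), y_j - w\rangle \ge f(y_j) - f(w)$ and $\langle\nabla f(y_j), w_j - w\rangle \le f(w_j) - f(w)$. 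A short bookkeeping check then shows that all the $f(y_j)$, $f(w_j)$, and $f(w)$ terms cancel, leaving $\Phi_{j+1} - \Phi_j \le 0$; the explicit coefficient computation can be relegated to the appendix.

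The step that requires genuine care, and the main potential obstacle, is the sign in (iii): the convexity bound $\langle\nabla f(y_j), w_j - w\rangle \le f(w_j) - f(w)$ enters multiplied by $\eta\lambda_j(\lambda_j - 1)$, which must be nonnegative, i.e. we need $\lambda_j \ge 1$. I would observe that this holds throughout provided the initial scale satisfies $\lambda_{\init} \ge 1$ (the standard choice is $\lambda_{\init} = 1$): for $j \ge 1$ one has $\lambda_j(\lambda_j - 1) = \lambda_{j-1}^2 \ge 0$ directly from the recursion, and since $\lambda_j$ is increasing, $\lambda_j \ge 1$ for all $j$; for $j = 0$ with $\lambda_0 = 1$ the offending coefficient is simply zero. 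With this observation in place the remainder is routine algebra, and no projection subtlety arises because the AGD updates used here are unconstrained.
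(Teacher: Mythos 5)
Your proposal is correct and follows essentially the same route as the paper's proof: the dual point $z_j = w_j + \lambda_j d_j$, the smoothness descent bound at the extrapolation point, the two convexity inequalities combined with weights $\lambda_j - 1$ and $1$, and the telescoping via $\lambda_{j+1}^2 - \lambda_{j+1} = \lambda_j^2$ are exactly the ingredients of the paper's argument, merely repackaged as a monotone potential $\Phi_j$ rather than a per-step linear combination of inequalities. Your explicit remark that $\lambda_j \geq 1$ is needed for the sign of the $(\lambda_j-1)$-weighted convexity bound is a point the paper leaves implicit, and is a worthwhile addition.
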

Further, to bound the stability, we note the following lemma which was essentially proved in \cite{chen2018stability}. Since we believe there is a small typo in the main argument in the original presentation of the proof, we provide an alternate derivation in \cref{sec:stability-proofs}. 

\begin{lemma}[Stability of AGD]
\label{lem:stab-agd}
Suppose that $f$ is a $\beta$-smooth convex quadratic function of $w$ for any $\xi$.
Then, for any $0 \leq \eta \leq 1/\beta$ and initial state $s_{\init}$, $K$
steps of accelerated gradient descent are $\epsilon$-uniformly stable with $\epsilon = O(\eta \rho^2 K^2/B).$
\end{lemma}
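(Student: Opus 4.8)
The plan is to track the divergence between two runs of accelerated gradient descent on batches $\batch$ and $\batch'$ that differ in a single example, and to exploit the quadratic structure to get a clean linear-algebraic handle on the propagation. Write $f(\cdot,\xi) = \frac12 (w-c_\xi)\tr A_\xi (w-c_\xi) + \text{const}$ with $0 \preceq A_\xi \preceq \beta I$, so that $\fBar_{\batch}$ is itself a convex quadratic with Hessian $\bar A = \frac1B\sum_i A_{\xi^{(i)}}$, and a single gradient step $w \mapsto w - \eta\nabla\fBar_\batch(w)$ is the affine map $w \mapsto (I - \eta\bar A)w + \eta\bar A \bar c$. First I would unroll one AGD step: writing the state as the pair $(w_j, d_j)$, the update $(w_{j+1},d_{j+1})$ is an affine function of $(w_j,d_j)$ whose \emph{linear} part is a fixed matrix $M_j$ (depending on $\eta\bar A$ and $\gamma_{j+1}$) acting on the stacked vector $(w_j; d_j)$. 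Consequently, if $(\t w_j, \t d_j)$ denotes the run on $\batch'$, the difference $\delta_j := (w_j - \t w_j;\ d_j - \t d_j)$ evolves as $\delta_{j+1} = M_j \delta_j + (\text{a perturbation term from the differing example})$, with $\delta_0 = 0$.

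The key quantitative steps are: (i) bound the per-step perturbation. Since $\batch,\batch'$ differ in one coordinate, $\nabla\fBar_\batch - \nabla\fBar_{\batch'}$ evaluated at the common argument $w_j + d_j$ has norm at most $\frac{2\rho}{B}$ by the $\rho$-Lipschitz assumption, so each step injects a vector of norm $O(\eta\rho/B)$ into the $\delta$-recursion. (ii) Bound the operator norm of the cumulative transition $M_{j}M_{j-1}\cdots$. This is exactly the place where the cited argument of \cite{chen2018stability} enters and where the claimed typo lives: one shows that the AGD iteration matrices, although not contractive, have operator norms growing only \emph{linearly} in the iteration index on the relevant subspace — i.e. $\|M_{K-1}\cdots M_{j}\| = O(K)$ — using that $I - \eta\bar A$ is a contraction ($\eta \le 1/\beta$) together with the specific growth rate $\lambda_j = \Theta(j)$ of Nesterov's momentum schedule. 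Summing the $K$ injected perturbations, each amplified by a factor $O(K)$, gives $\|\delta_K\| = \|w_K - \t w_K\| = O(\eta\rho K^2/B)$. (iii) Translate displacement in parameter space into the stability bound: $\rho$-Lipschitzness of $f(\cdot,\xi)$ gives $|f(\A(\batch),\xi) - f(\A(\batch'),\xi)| \le \rho\|w_K - \t w_K\| = O(\eta\rho^2 K^2/B)$, which is the claimed $\epsilon$.

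The main obstacle is step (ii): controlling $\|M_{K-1}\cdots M_j\|$ for the accelerated iteration. Unlike plain gradient descent, the single-step map is not a contraction, and a naive bound $\prod\|M_j\| $ is exponential in $K$; getting the correct $O(K)$ amplification requires the stability-of-AGD argument of \cite{chen2018stability}, decomposing $\bar A$ along its eigenbasis and analyzing the scalar recursion driven by the momentum coefficients $\gamma_{j+1}$ in each eigendirection. I expect to essentially reproduce their induction — establishing a bound of the form $\|(w_j - \t w_j;\, \lambda_j(d_j - \t d_j))\| = O(j\cdot \eta\rho/B)$ by induction on $j$ — but with the sign/indexing of one term corrected, and then present this corrected derivation in \cref{sec:stability-proofs}; the quadratic-only hypothesis is precisely what makes $\bar A$ a fixed matrix across the $K$ inner steps, which is what the eigendecomposition argument needs.
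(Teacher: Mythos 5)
Your plan is correct and follows essentially the same route as the paper: both reduce stability to a difference recursion $\delta_{j+1} = M_j\delta_j + O(\eta\rho/B)$ driven by the single differing example, and both identify the crux as the linear (rather than exponential) growth of $\lVert M_{K-1}\cdots M_j\rVert$, which is delegated to the eigendecomposition argument of \citet{chen2018stability} (Theorem 11 / Lemma 20) with the indexing correction; the paper's appendix supplies that correction explicitly via Chebyshev polynomials of the second kind for the boundary case $\gamma_i=-1$. No gaps beyond the deliberately deferred matrix-product bound, which is exactly what the paper also defers to and then patches.
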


Combining \cref{lem:nagdregret,lem:stab-agd} we obtain the following
guarantee for data-echoed AGD:

\begin{theorem}
\label{thm:deagd}
Suppose $f$ is a convex quadratic in $w$, for all $\xi$. Then, $T$ outer steps of echoed AGD, with echoing factor $K$ and step size 
$
  \eta 
  = \Theta\brk{\min\set{ \frac{1}{\beta}, \frac{\rho}{K^2D\sqrt{B/T^{3/2}}} }},
$ produces a point $w_{out}$ satisfying 
\[
  \E[F(w_{\out})] - F(w^*) 
  = O\left(\frac{\beta\|w_0 - w^*\|^2}{K^2T^2} + \frac{\rho \|w_0 - w^*\|}{\sqrt{BT}}\right)
  .
\]
\end{theorem}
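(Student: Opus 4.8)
The plan is to reproduce the argument behind \cref{thm:main}, but adapted to the final-iterate meta-algorithm \cref{alg:dataecho2} and to the \emph{weighted} regret bound of \cref{lem:nagdregret}. First observe that the momentum-scale sequence is continued across outer steps, so let $\Lambda_0,\Lambda_1,\dots,\Lambda_{KT}$ be the global sequence generated by $\Lambda_{j+1}^2-\Lambda_{j+1}=\Lambda_j^2$ from the initialization $\lambda_0$ (with the natural choice $d_0=0$, $\lambda_0\in\{0,1\}$); at outer step $t$ the inner AGD call has $\lambda_{\init}=\Lambda_{tK}$ and $\lambda_{\out}=\Lambda_{(t+1)K}$. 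Write $a_t := \Lambda_{tK}^2-\Lambda_{tK}$; the standard estimate $\Lambda_j=\Theta(j)$ for Nesterov's sequence gives $a_t=\Theta(t^2K^2)$ for $t\ge 1$, and $a_0=0$. I would then apply \cref{lem:nagdregret} at each outer step $t$ with $f=\fBar_{\batch^{(t)}}$, comparator $w=w^*$, endpoints $w_t,w_{t+1}$, momenta $d_t,d_{t+1}$, and the $\Lambda$-values above.

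Now take expectations and insert stability at the right place. Since $w_t,s_t$ depend only on $\batch^{(0)},\dots,\batch^{(t-1)}$ and hence are independent of the fresh batch $\batch^{(t)}$, we have $\E[\fBar_{\batch^{(t)}}(w_t)-\fBar_{\batch^{(t)}}(w^*)]=\E[F(w_t)]-F(w^*)$ exactly; and conditioned on $\batch^{(0:t-1)}$ the map $\batch^{(t)}\mapsto w_{t+1}$ is a deterministic algorithm, so the stability-to-generalization theorem together with \cref{lem:stab-agd} gives $\E[\fBar_{\batch^{(t)}}(w_{t+1})]\ge\E[F(w_{t+1})]-\epsilon$ with $\epsilon=O(\eta\rho^2K^2/B)$. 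Substituting into the expected regret inequality and rearranging, the Euclidean terms telescope: with the potential $\Phi_t := a_t(\E[F(w_t)]-F(w^*))+\tfrac{1}{2\eta}\E\lrnorm{w_t+\Lambda_{tK}d_t-w^*}^2$ one obtains $\Phi_{t+1}\le\Phi_t+a_{t+1}\epsilon$. Summing over $t=0,\dots,T-1$, dropping the nonnegative momentum term from $\Phi_T$, and using $\Phi_0=\tfrac{1}{2\eta}\norm{w_0-w^*}^2$ and $\sum_{t=1}^{T}a_t=\Theta(K^2T^3)$, division by $a_T=\Theta(K^2T^2)$ yields $\E[F(w_T)]-F(w^*)=O\lr{\tfrac{\norm{w_0-w^*}^2}{\eta K^2T^2}+\epsilon T}$ (the case $\E[F(w_T)]\le F(w^*)$ being trivial). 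Plugging in $\epsilon=O(\eta\rho^2K^2/B)$ gives $O\lr{\tfrac{\norm{w_0-w^*}^2}{\eta K^2T^2}+\tfrac{\eta\rho^2K^2T}{B}}$, and choosing $\eta=\Theta(\min\{1/\beta,\dots\})$ as in the statement to balance these two terms (clipped at $1/\beta$) gives the claimed $O\lr{\tfrac{\beta\norm{w_0-w^*}^2}{K^2T^2}+\tfrac{\rho\norm{w_0-w^*}}{\sqrt{BT}}}$, after checking that in the clipped regime the stability contribution $\tfrac{\rho^2K^2T}{\beta B}$ is itself $O(\rho\norm{w_0-w^*}/\sqrt{BT})$.

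The main obstacle is the interface between the weighted, non-averaging AGD regret and the changing per-round objectives: unlike the average-iterate case of \cref{thm:main}, the function-value contributions $a_t(\fBar_{\batch^{(t)}}(w_t)-\fBar_{\batch^{(t)}}(w^*))$ and $a_{t+1}(\fBar_{\batch^{(t)}}(w_{t+1})-\fBar_{\batch^{(t)}}(w^*))$ do not telescope across rounds because the batch changes, so one must convert both to population quantities — the first exactly via independence, the second up to $\epsilon$ via stability — so that consecutive coefficients $a_t$ line up and the telescoping happens with respect to $F$. Getting the rate correct then rests on the arithmetic $\sum_{t\le T}a_t/a_T=\Theta(T)$, which is precisely where the extra factor $T$ multiplying $\epsilon$ (and hence the $T^{3/2}$-type dependence in the tuned step size) originates. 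Finally, the restriction to convex quadratics enters the proof \emph{only} through \cref{lem:stab-agd}: for general $\beta$-smooth convex losses the corresponding $O(\eta\rho^2K^2/B)$ stability of AGD is not known, so that single step is the sole bottleneck to removing the quadratic assumption.
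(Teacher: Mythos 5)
Your proposal is correct and follows essentially the same route as the paper's proof: apply the weighted regret bound of \cref{lem:nagdregret} per outer step with the momentum-scale sequence continued globally, convert batch losses to population losses via independence (for $w_t$) and the stability bound of \cref{lem:stab-agd} (for $w_{t+1}$), telescope the resulting potential, and divide by $\lambda_{T-1}^2=\Theta(K^2T^2)$ using $\sum_t \lambda_t^2=\Theta(K^2T^3)$ before tuning $\eta$. Your write-up is in fact somewhat more explicit than the paper's (which compresses the telescoping into ``appropriately scaling and summing'') and correctly verifies the clipped-regime case, but there is no substantive difference in approach.
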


\section{Experiments}
We demonstrate numerical experiments on convex machine learning benchmarks. This acts as a validation of our theoretical findings, as well as a way to examine ``beyond worst-case'' phenomena not captured by our minimax convergence guarantees. This can be seen as a combination of the experiments of Figures 4-6 in \cite{choi2019faster}, where we have exchanged the state-of-the-art setting for a more robust one, allowing for a closer dissection of the bias-variance decomposition.

\paragraph{Methodology.} We consider two logistic regression problems as a benchmark, the scaled CoverType dataset from the UCI repository \cite{dua2019uci}, and MNIST \cite{lecun2010mnist}. We record the number of iterations (including as well as excluding the data-echoing iterations) needed for SGD to reach within 1\% of the optimum training loss, as we increase the echoing factor $K$, and thus decrease the \emph{rate} of fresh independent samples usable by SGD. For each choice of $(B,K)$, we tune a constant learning rate by grid search, to minimize this time. All details can be found in \cref{sec:experiment-appendix}.

\paragraph{Results and discussion.} Figures~\ref{fig:covtype-echoing} and \ref{fig:mnist-echoing} show our findings. As batch size $B$ increases, there is a phase transition from a variance-dominated regime (the $O(\rho D / \sqrt{B T})$ term in our analysis is larger) to a bias-dominated regime (the $O(\beta D / KT)$ term is larger). In the former regime, data-echoed SGD saturates on the stale data, and the optimal learning rate scales inversely with $K$, as predicted by the theory. In the latter regime, echoing attains a nearly embarrassingly-parallel speedup, and the optimal learning rate is close to constant. These experiments provide an end-to-end example of how the bias-variance decomposition can help to understand and diagnose the benefits and limitations of data-echoed algorithms.

\paragraph{A note on deep neural nets.} Our theoretical setting was originally motivated by hardware constraints most frequently encountered in the massively parallel training of deep neural networks. Beyond the convex setting, we note that the experimental design problem become significantly more challenging. Some potential confounds include the learning rate choice affecting the generalization gap \cite{jiang2020exploring}, and counterintuitive interactions between learning rate and batch normalization \cite{arora2018theoretical,li2019exponential}. In \cite{choi2019faster}, the authors study the \emph{end-to-end} performance gains of data echoing. Indeed, those experiments need many tweaks (like \emph{example-wise} echoing, data re-augmentation, and individually tuned momentum and learning rate schedules) to obtain their most impressive speedups.

\definecolor{plotyellow}{rgb}{0.5, 0.5, 0}

\begin{figure}
    \centering
    \includegraphics[width=\linewidth]{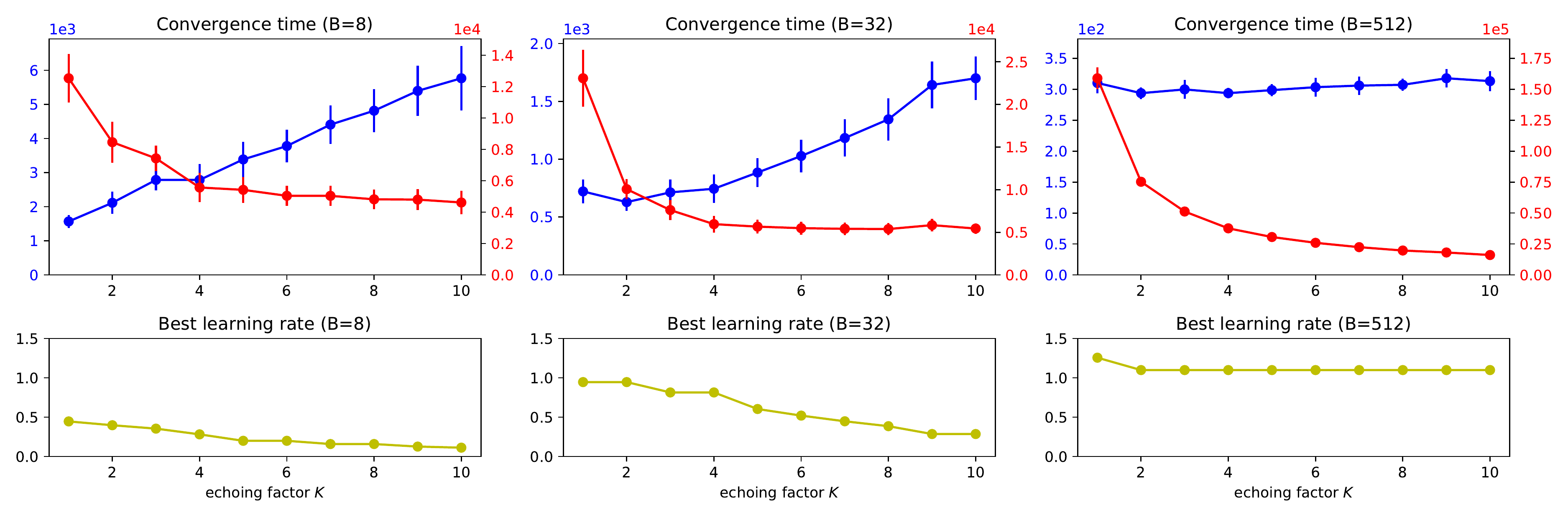}
    \vspace{-5mm}
    \caption{Convergence times as a function of echoing factor $K$, for logistic regression on the CoverType dataset. Learning rates (\textcolor{plotyellow}{yellow}) are tuned for each $(B,K)$ to minimize convergence times. Convergence times are presented in number of SGD steps $KT$ (\textcolor{blue}{blue}), as well as number of independent samples consumed $BT$ (\textcolor{red}{red}). Note that the \textcolor{red}{red} curves reflect wall-clock time for data-echoing when the data loader is $K$ times slower than the optimizer. As batch size $B$ increases, we move from the noise-dominated regime (\textcolor{red}{red curve plateaus}) to the curvature-dominated regime (\textcolor{blue}{blue curve plateaus}).}
    \label{fig:covtype-echoing}
\end{figure}

\begin{figure}
    \centering
    \includegraphics[width=\linewidth]{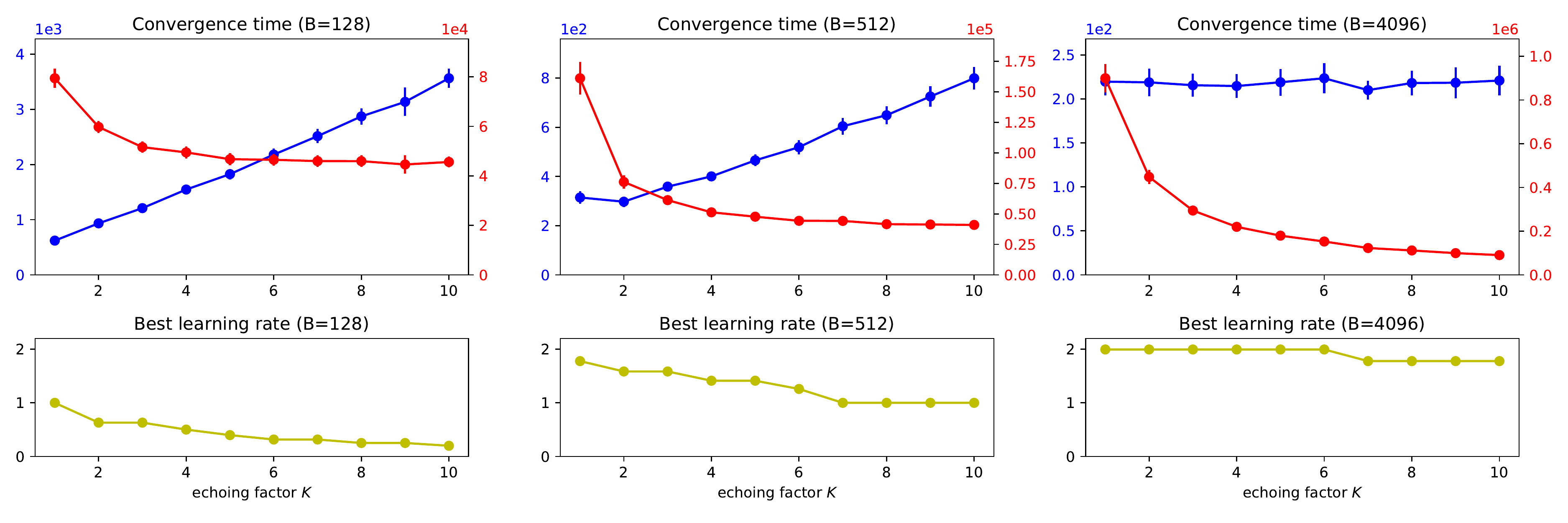}
    \vspace{-5mm}
    \caption{Convergence times, as in Figure~\ref{fig:covtype-echoing}, for logistic regression on the MNIST dataset. Note that the phase transition from noise-dominated to curvature-dominated regimes happens in a batch size range commonly used in deep learning benchmarks with this dataset.}
    \label{fig:mnist-echoing}
\end{figure}

\section{Conclusion}

We have established first theoretical analysis in the nascent field of
optimization algorithms for asynchronous data pipelines, where we have found that gradient descent and well-known variants can be adapted to resist overfitting to stale data. An immediate open problem is to develop a corresponding theory for local convergence and saddle point avoidance in the non-convex setting. This work provides further motivation to show the $O\brk{ \eta \rho^2 K^2 / B }$-uniform stability of AGD for smooth convex functions, which was conjectured in \cite{chen2018stability} with different motives.
More broadly, we hope that the design and analysis of algorithms in optimization for machine learning can derive fruitful inspiration from nascent hardware considerations, like those that motivated this work.

\section*{Acknowledgements}
We are grateful to George Dahl and Yoram Singer for helpful discussions.

\bibliographystyle{abbrvnat}
\bibliography{main}

\newpage

\appendix

\section{Proofs for potential-bounded regret lemmas}
\label{sec:regret-proofs}

In this section we provide the proofs of \cref{lem:gdregret,lem:pgdregret,lem:nagdregret}, which concern potential-bounded regret.

\begin{proof}[Proof of \cref{lem:gdregret}] 
To remind the reader, a step of gradient descent with step-size $\eta$ is given
by 
\[
  w_{j+1} = w_j - \eta \nabla f(w_j)
  ,
\] 
with $w_0 := w_{\init}$ and $w_{\out} := w_K$. Further fix an arbitrary $w^* \in
\W$. Using the definition of $w_{j+1}$ and convexity we get that, 
\begin{align}
f(w_j) - f(w^*) &\leq \nabla f(w_j)(w_j - w^*) \nonumber\\
&\leq \frac{1}{2\eta} \big(\norm{w_j-w^*}^2 - \norm{w_{j+1}-w^*}^2 \big)
	+ \frac{\eta}{2} \norm{\nabla f(w_j)}^2. \label{eq:stdregret1}
\end{align}
Furthermore, using $\beta$-smoothness we get
\begin{align*}
    f(w_{j+1}) - f(w_j)
    &\leq \nabla f(w_j)(w_{j+1} - w_j) + \frac{\beta}{2}\|w_{j+1}-w_j\|^2 \\&\leq - \eta (1-\tfrac{1}{2}\eta\beta) \norm{\nabla f(w_j)}^{2}.
\end{align*}
Therefore for $0 \leq \eta \leq 1/\beta$, we have that 
\begin{align}
    \norm{\nabla f(w_j)}^{2}
	&\leq
	\frac{1}{\eta (1-\tfrac{1}{2}\eta\beta)} \big( f(w_j)-f(w_{j+1}) \big) \nonumber\\
	&\leq
	\frac{2}{\eta} \big( f(w_j)-f(w_{j+1}) \big). \label{eq:descentlemma1}
\end{align} 
Collecting \cref{eq:stdregret1,eq:descentlemma1}, summing and rearranging, we obtain
\begin{align*}
	\sum_{t=0}^{K-1} \big( f(w_{j+1}) - f(w^*) \big)
	\leq
	\frac{1}{2\eta} \big( \norm{w_0-w^*}^2 - \norm{w_K-w^*}^2 \big)
	.
\end{align*}
Finally, observe that \cref{eq:descentlemma1} also implies $f(w_K) \leq f(w_j)$ for all $1 \leq j \leq K$, which now gives the lemma.
\end{proof}

\begin{proof}[Proof of \cref{lem:pgdregret}]
Fix an arbitrary $w^* \in \W$. Using the definition of $w_{j+1}$ and the $\lambda$ strong-convexity of $f_{\mathrm{prox}}$ we get that, 
\begin{align}
f_{\mathrm{prox}}(w_j) - f_{\mathrm{prox}}(w^*) \nonumber &\leq \nabla f_{\mathrm{prox}}(w_j)(w_j - w^*) - \frac{\gamma}{2}\|w_j - w^*\|^2  \nonumber\\
&\leq \frac{1}{2\eta} \big(\norm{w_j-w^*}^2 - \norm{w_{j+1}-w^*}^2 \big)
	+ \frac{\eta}{2} \norm{\nabla f_{\mathrm{prox}}(w_j)}^2  - \frac{\gamma}{2}\|w_j - w^*\|^2. \label{eq:stdregret2}
\end{align}
Furthermore, using the $(\beta + \gamma)$-smoothness of $f_{\mathrm{prox}}$ we get
\begin{align*}
    f_{\mathrm{prox}}(w_{j+1}) - f_{\mathrm{prox}}(w_j)
    &\leq \nabla f_{\mathrm{prox}}(w_j)(w_{j+1} - w_j) + \frac{\beta+\gamma}{2}\|w_{j+1}-w_j\|^2 \\
    &= - \eta (1-\tfrac{1}{2}\eta(\beta+\gamma)) \norm{\nabla f_{\mathrm{prox}}(w_j)}^{2}.
\end{align*}
Therefore, for $0 \leq \eta \leq 1/(\beta+\gamma)$, we have that 
\begin{align}
    \norm{\nabla f_{\mathrm{prox}}(w_j)}^{2}
	&\leq
	\frac{1}{\eta (1-\tfrac{1}{2}\eta(\beta+\gamma))} \big( f_{\mathrm{prox}}(w_j)-f_{\mathrm{prox}}(w_{j+1}) \big) \nonumber\\
	&\leq
	\frac{2}{\eta} \big( f_{\mathrm{prox}}(w_j)-f_{\mathrm{prox}}(w_{j+1}) \big). \label{eq:descentlemma2}
\end{align} 
Collecting \cref{eq:stdregret2,eq:descentlemma2}, summing and rearranging, we
obtain
\begin{align*}
	\frac{1}{K} \sum_{t=0}^{K-1} \big( f_{\mathrm{prox}}(w_{j+1}) - f_{\mathrm{prox}}(w^*) \big)
	&\leq
	\frac{1}{2\eta K} \big( \norm{w_0-w^*}^2 - \norm{w_K-w^*}^2 \big) 
	- \sum_{j=0}^{K-1} \frac{\gamma}{2K}\|w_j - w^*\|^2
	\\
	&\leq
	\frac{1}{2\eta K} \big( \norm{w_0-w^*}^2 - \norm{w_K-w^*}^2 \big) 
	- \frac{\gamma}{2}\left\|\frac{\sum_j w_j}{K} - w^*\right\|^2
	\\
	&\leq
	\frac{1}{2\eta K} \big( \norm{w_0-w^*}^2 - \norm{w_K-w^*}^2 \big) 
	- \frac{\gamma}{2}\left\|s_{\out} - w^*\right\|^2.
\end{align*}
Finally, observe that \cref{eq:descentlemma2} also implies $f_{\mathrm{prox}}(w_K) \leq f_{\mathrm{prox}}(w_j)$ for all $j$.  Therefore we have that
\begin{align*}
	f(w_{\out}) - f(w^*) - \frac{\gamma}{2}\|w^* - s_{\init}\|^2 &\leq
	\frac{1}{K} \sum_{t=0}^{K-1} \big( f_{\mathrm{prox}}(w_{j+1}) - f_{\mathrm{prox}}(w^*) \big)
	\\
	&\leq
	\frac{1}{2\eta K} \big( \norm{w_0-w^*}^2 - \norm{w_K-w^*}^2 \big) 
	- \frac{\gamma}{2}\left\|s_{\out} - w^*\right\|^2,
\end{align*}
This concludes the lemma.
\end{proof}

\begin{proof}[Proof of \cref{lem:nagdregret}]
Let $x_{j+1} \defeq w_j + d_j$. Fix an arbitrary $w^* \in \W$ and define $h(w) \defeq f(w)-f(w^*)$. We will now collect a host of inequalities that will be useful. 
First by smoothness and the choice of $\eta$ we have
\begin{align*}
    f(w_{j+1}) - f(x_{j+1}) \leq -\frac{\eta}{2}\|\nabla f(x_{j+1})\|^2 
    .
\end{align*}
Further, by convexity we have
\begin{align*}
    f(x_{j+1}) - f(w_j) &\leq \nabla f(x_{j+1})^{\top} d_j ;
  \\
   f(x_{j+1}) - f(w^*) &\leq \nabla f(x_{j+1})^{\top}(w_j + d_j - w^*) 
   .
\end{align*}
Adding the above we get 
\begin{align}
    h(w_{j+1}) - h(w_j) 
    &\leq -\frac{\eta}{2}\|\nabla f(x_{j+1})\|^2 + \nabla f(x_{j+1})^{\top}d_j ;
    \label{eqn:ineq1}
    \\
    h(w_{j+1}) 
    &\leq -\frac{\eta}{2}\|\nabla f(x_{j+1})\|^2 + \nabla f(x_{j+1})^{\top}(w_j + d_j - w^*) .
    \label{eqn:ineq2}
\end{align}
Furthermore, note that 
\begin{align} \label{eqn:ineq3}
  \lambda_j \|\eta \nabla f(x_{j+1})\|^2 &+ 2\eta \nabla f(x_{j+1})^{\top}(w_j + \lambda d_j - w) \nonumber \\
  &= \frac{1}{\lambda_j}\left( \|w_j + \lambda_j d_j - w^* + \lambda_j \eta \nabla f(w_{j+1})\|^2 - \|w_j + \lambda_j d_j - w^*\|^2\right) \nonumber \\
  &= \frac{1}{\lambda_j}\left( \|w_{j+1} + \lambda_{j+1} d_{j+1} - w^*\|^2 - \|w_j + \lambda_j d_j - w^* \|^2\right)
  .
\end{align}
Adding $(\lambda_j - 1)$ times \cref{eqn:ineq1}, 1 times \cref{eqn:ineq2} and $(-1/2\eta)$ times \cref{eqn:ineq3} gives us
\[
  \lambda_j^2 h(w_{j+1}) - (\lambda_j^2 - \lambda_j)h(w_j) \leq \frac{1}{2\eta}
(u_j - u_{j+1})
,
\] 
where $u_j = \|w_j + \lambda_j d_j - w^*\|^2$. Summing this over time we get
\begin{align*}
  (\lambda_{K}^2 - \lambda_K) h(w_{K}) - (\lambda_0^2 - \lambda_0)h(w_0)
  &= \lambda_{K-1}^2 h(w_{K}) - (\lambda_0^2 - \lambda_0)h(w_0) \\
  &\leq \frac{1}{2\eta}(\|w_0 + \lambda_0 d_0 - w^*\|^2 - \|w_K + \lambda_K d_K - w^*\|^2)  
\end{align*}
which finishes the proof. 

\end{proof}

'

\section{Stability proofs}
\label{sec:stability-proofs}
In this section, we prove the bounds on the stability of the respective algorithms (\cref{lem:stab-gd,,lem:stab-pgd,,lem:stab-agd}). Our general recipe for showing stability of various algorithms would be to show that the points visited by the iterative algorithms themselves do not differ by much. 
To this end, note that since $f$ is Lipschitz, we have that
\begin{equation}
\label{eqn:lipbound}
    \sup_{\xi \in \Xi} |f(\A(\batch), \xi) - f(\A(\batch'), \xi)| \leq \rho \|\A(\batch) - \A(\batch')\|
    .
\end{equation}

Thus, it is sufficient to show to bound $\A(\batch) - \A(\batch')$, which is what we do next. 

\begin{proof}[Proof of \cref{lem:stab-gd}]
For simplicity of presentation we assume that the Hessian of $f$ is a continuous function. The more general case can be derived by following the arguments in \cite{hardt2015train}. 

Let $w_j^{\batch}$ and $w_j^{\batch'}$ denote the points generated by gradient descent on $\batch$ and $\batch'$ respectively. Further define 
\[\Delta w_j := w_j^{\batch'} - w_j^{\batch};\]
\[\nabla \fBar_{\batch}(\Delta w_j) := \nabla \fBar(w_j^{\batch'}) - \nabla \fBar(w_j^{\batch}).\]
Therefore via the mean value theorem, we can write this as 
\[\nabla \fBar_{\batch}(\Delta w_j) = H_j(\Delta w_j)\]
for some $H_j$ along the line segment from $w_j^{\batch}$ to $w_j^{\batch'}$. It is now easy to see
\[\Delta w_{j+1} = (I - \eta H_j)\Delta w_{j} + \frac{\eta}{B}(\nabla f(w_j^{\batch'}, \xi') - \nabla f(w_j^{\batch'}, \xi)).\]
Noting that $0 \preceq I - \eta H_j \preceq I$(by the choice of $\eta$) and that $\|\nabla f\| \leq \rho$, we have that
\[\|\Delta w_{j+1}\| \leq \|\Delta w_{j}\| + \frac{2 \eta \rho}{B}.\]
The proof is now finished by using \cref{eqn:lipbound}.
\end{proof}

\begin{proof}[Proof of \cref{lem:stab-pgd}]
The proof follows the exact same structure as in the proof of \cref{lem:stab-gd}, except at the end where since the prox function is $\lambda$ strongly convex we get that
\[0 \preceq I - \eta H_j \preceq (1 - \eta\gamma)I.\]
Replacing this gives us 
\[\|\Delta w_{j+1}\| \leq (1 - \eta\gamma)\|\Delta w_j\| + \frac{2 \eta \rho}{B}.\]
Unrolling the above over $0 \leq j \leq K-1$ and using \cref{eqn:lipbound} gives the result. 
\end{proof}

\begin{proof}[Proof of \cref{lem:stab-agd}]
As mentioned before the proof follows exactly along the lines of the proof of Theorem 11 in \cite{chen2018stability}. It can easily be seen from the original proof that the presence of an initial momentum term $d_0$ (which is assumed to be 0 in the original proof) makes no difference to the arguments. Furthermore starting the $\lambda$ sequence from $\lambda_j$ also does not make any difference to the proof, as it only requires $-1 \leq \gamma_j \leq 0$ which our sequence also continues to satisfy irrespective of the choice of $\lambda_0$. 

We believe that there is a small typo in the main argument of the original proof in Lemma~20 in~\cite{chen2018stability}. We fix the slight indexing error of the argument. In particular, the proof boils down to showing the following lemma. 

\begin{lemma}[Lemma 20, \citealp{chen2018stability}]
\label{lem:chebyshev}
Suppose 
$$H_i = \begin{bmatrix}
(1 - \gamma_i)h & \gamma_i h \\
1 & 0
\end{bmatrix}$$ where $h \in [0,1]$ and $\gamma_i \in (-1,1).$ 
Then, for all $t \in \mathbb{N}$,
\[\left\|\prod_{i=1}^j H_i\right\|_2 \leq 2(j+1).\]
\end{lemma}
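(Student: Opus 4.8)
The plan is to reduce the operator-norm bound to an estimate for a scalar three-term recurrence. Since each $H_i$ is a companion-type matrix, one checks (expanding $P_j := \prod_{i=1}^{j} H_i$ one factor at a time; the bound is insensitive to the multiplication order, as reversing it merely relabels the $\gamma_i$) that the entries of $P_j$, as functions of $j$, satisfy
\[
  u_\ell \;=\; h(1-\gamma_\ell)\,u_{\ell-1} + h\gamma_\ell\, u_{\ell-2} \;=\; h\bigl[\,u_{\ell-1} - \gamma_\ell(u_{\ell-1}-u_{\ell-2})\,\bigr],
\]
with initial data inherited from the identity matrix, so $|u_{-1}|,|u_0|\le 1$, and in fact the relevant sequences also have $|u_0-u_{-1}|\le 1$. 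If I can show $|u_\ell|\le \ell+1$ for these sequences, then collecting the (at most four) entries of $P_j$ gives $\|P_j\|_2\le\|P_j\|_F\le 2(j+1)$. Equivalently, I could conjugate every $H_i$ by the fixed, $O(1)$-conditioned change of basis $B$ with columns $(1,1)^{\!\top}$ and $(1,0)^{\!\top}$, adapted to the eigenvalue-$1$ direction of the $h=1$ dynamics; in this basis $H_i$ is exactly upper triangular when $h=1$, and a rank-one perturbation of an upper-triangular matrix of size $O(1-h)$ in general, with its first coordinate simply accumulating the previous state.

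The transparent case is $h=1$. Then $(1,1)^{\!\top}$ is a common eigenvector with eigenvalue $1$ for all $H_i$, so in the basis $B$ each $H_i$ is upper triangular with diagonal $(1,-\gamma_i)$; the product is upper triangular with diagonal $(1,\prod_i(-\gamma_i))$ and with $(1,2)$-entry equal to a sum of at most $j$ terms, each a product of a contiguous block of the $\gamma_i$'s and therefore bounded by $1$ in absolute value. Hence $\|P_j\|=O(j)$ in this case, with no cancellation subtleties. Equivalently, when $h=1$ the increment $d_\ell:=u_\ell-u_{\ell-1}$ obeys the clean recursion $d_\ell=-\gamma_\ell d_{\ell-1}$, so $|d_\ell|\le|d_{\ell-1}|\le|d_0|\le 1$ and $|u_\ell|\le|u_0|+\sum_{k\le\ell}|d_k|\le \ell+1$.

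For general $h\le 1$ I would run the same bookkeeping through the recurrence, but now $d_\ell$ satisfies $d_\ell=-(1-h)u_{\ell-1}-h\gamma_\ell d_{\ell-1}$, which couples back to $u$. First I would note that each $H_i$ has spectral radius at most $1$: its characteristic polynomial $x^2-h(1-\gamma_i)x-h\gamma_i$ has both roots in the closed unit disk for $h\in[0,1]$, $\gamma_i\in(-1,1)$, with modulus exactly $1$ attained only in the degenerate limit $h\to 1,\ \gamma_i\to -1$ — a genuine Jordan block, which is the source of the linear (and only linear) growth. The induction then has to control $|u_\ell|$ and $|d_\ell|$ simultaneously, using the crucial fact that the increments $d_\ell$ cannot all add constructively onto $u_\ell$: the forcing term $-(1-h)u_{\ell-1}$ and the damping $-h\gamma_\ell d_{\ell-1}$ partially cancel, and the large forcing coefficient $1-h$ is present precisely when the contraction $|h\gamma_\ell|<1$ is effective. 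Exploiting this tradeoff keeps $\sum_{k\le\ell}|d_k|=O(\ell)$ and hence $|u_\ell|\le\ell+1$; one clean way to organize it is a dichotomy on whether $1-h$ is small or large compared to $1/j$, handling the interaction with $|\gamma_\ell|$ near $1$ in the large-$(1-h)$ regime.

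The hard part is exactly this uniform estimate on $d_\ell$ (equivalently on $u_\ell$). Because the $H_i$ are strongly non-normal, $\|P_j\|$ cannot be bounded via $\prod_i\|H_i\|$ (that is exponential), nor is there a single quadratic Lyapunov function valid for all $(h,\gamma)$, since $(h,\gamma)=(1,-1)$ is a true Jordan block; the bound must be extracted entirely from the shared near-invariant structure at eigenvalue $1$, and the naive triangle-inequality estimate of $|d_\ell|$ overshoots precisely because it ignores the sign cancellation above. I believe this is where the original argument in \cite{chen2018stability} slipped an index, and that the fix amounts to correctly aligning the indices in the increment recursion and its telescoping. Once $|u_\ell|\le\ell+1$ is established for the two entry-sequences of $P_j$, the Frobenius bound yields $\bigl\|\prod_{i=1}^{j}H_i\bigr\|_2\le 2(j+1)$, completing the proof.
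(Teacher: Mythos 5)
Your reduction to a scalar three-term recurrence is sound: writing $u_\ell$ for a first-row entry of the partial product, one does get $u_\ell = h(1-\gamma_\ell)u_{\ell-1}+h\gamma_\ell u_{\ell-2}$ with $|u_0|,|u_{-1}|\le 1$, the second row lags by one index, and establishing $|u_\ell|\le \ell+1$ for both column sequences would give $\bigl\|\prod_{i=1}^{j}H_i\bigr\|_2\le\bigl\|\prod_{i=1}^{j}H_i\bigr\|_F\le 2(j+1)$. The $h=1$ case is complete and correct ($d_\ell=-\gamma_\ell d_{\ell-1}$, hence $|d_\ell|\le 1$). But the argument stops exactly where the lemma becomes nontrivial. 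For $h<1$ you derive $d_\ell=-(1-h)u_{\ell-1}-h\gamma_\ell d_{\ell-1}$ and then \emph{assert} that a dichotomy on $1-h$ versus $1/j$, together with ``sign cancellation,'' yields $\sum_{k\le\ell}|d_k|=O(\ell)$. This is a genuine gap, not a routine omission: the triangle inequality applied to your coupled recursion gives only $|d_\ell|\le(1-h)|u_{\ell-1}|+h|\gamma_\ell|\,|d_{\ell-1}|$, whose accumulated bound under the inductive hypothesis $|u_{\ell-1}|\le\ell$ is of order $(1-h)\ell/(1-h|\gamma_\ell|)$, which can be $\Theta(\ell)$ when $|\gamma_\ell|\to 1$; summing then yields $|u_\ell|=O(\ell^2)$, as you yourself note. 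To beat this you must exploit cancellation between the forcing $-(1-h)u_{\ell-1}$ and the damping $-h\gamma_\ell d_{\ell-1}$, but the $\gamma_\ell$ may change sign and magnitude adversarially at every step, and you exhibit no invariant in $(u_\ell,d_\ell)$ that closes the induction. Calling this ``the hard part'' and conjecturing that the fix is an index realignment does not discharge it; as written, the lemma is not proved for any $h<1$ with non-constant $\gamma_\ell$.

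It is also worth noting that your route is structurally different from the paper's, which does not attempt a self-contained bound on the general product at all: it defers to the case analysis of \cite{chen2018stability}, which reduces to the extreme values $\gamma_i\in\{-1,1\}$, and only repairs the broken case $\gamma_i\equiv -1$, where the product collapses to powers of a single matrix $H$. There the authors prove by induction an exact closed form for $H^j$ whose entries are Chebyshev polynomials of the second kind evaluated at $\sqrt{h}$ (times factors of modulus at most one), and conclude from $|U_j(r)|\le j+1$ for $|r|\le 1$. If you wish to salvage your more ambitious direct approach, the realistic path is the same: obtain the entrywise bound for the extremal constant-$\gamma$ recurrences via the explicit Chebyshev-type solution, and then supply the extremality/monotonicity argument in the $\gamma_\ell$'s that you are currently importing implicitly.
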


The proof of the lemma proceeds by analyzing the cases when $\y_i \in \{-1,1\}$. The only case where we differ from the presented proof is when $\gamma_i = -1$. In this case, we have 
$$
H_i = H := \begin{bmatrix}
2h & -h \\
1 & 0
\end{bmatrix},
$$
and we need to bound the operator norm of the powers of this matrix for all $h \in [0,1]$.

\begin{lemma}
\label{lem:cheb-powers}
For any $n \geq 1$, we have
\[ H^{2n} = h^n \begin{bmatrix} U_{2n}(\sqrt{h}) & -\sqrt{h} \cdot U_{2n-1}(\sqrt{h}) \\ \frac{U_{2n-1}(\sqrt{h})}{\sqrt{h}} & -U_{2n-2}(\sqrt{h}) \end{bmatrix}, \]
and
\[ H^{2n+1} = h^n \begin{bmatrix} \sqrt{h} U_{2n+1}(\sqrt{h}) & -h \cdot U_{2n}(\sqrt{h}) \\ U_{2n}(\sqrt{h}) & -\sqrt{h} U_{2n-1}(\sqrt{h}) \end{bmatrix}, \]
where $U_n(\cdot)$ is the $n$-th Chebyshev polynomial of the second kind.
\end{lemma}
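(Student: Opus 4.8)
The plan is to prove the identity as a purely algebraic fact about powers of a $2\times 2$ matrix, using the Cayley--Hamilton theorem together with the three-term recurrence defining the Chebyshev polynomials of the second kind; the split into even and odd exponents is only done at the very end.

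First I would record that $H=\begin{bmatrix}2h&-h\\1&0\end{bmatrix}$ has $\trace H = 2h$ and $\det H = h$, so its characteristic polynomial is $\lambda^2-2h\lambda+h$ and Cayley--Hamilton gives $H^2 = 2hH - hI$. Consequently every power lies in $\mathrm{span}\{I,H\}$: writing $H^n = a_n H + b_n I$, both scalar sequences obey the common recurrence $x_{n+1} = 2h\,x_n - h\,x_{n-1}$, with $(a_1,b_1)=(1,0)$ and, reading off $H^2 = 2hH - hI$, $(a_2,b_2)=(2h,-h)$.

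Next I would identify these sequences with rescaled Chebyshev polynomials. Recall $U_m$ is defined by $U_{-1}=0$, $U_0=1$, $U_{m+1}(x)=2xU_m(x)-U_{m-1}(x)$. I claim that for all $n\ge 1$,
\[
  a_n \;=\; h^{(n-1)/2}\,U_{n-1}(\sqrt h),
  \qquad
  b_n \;=\; -\,h^{n/2}\,U_{n-2}(\sqrt h).
\]
This follows by checking $n=1,2$ directly and then inducting: substituting the claimed forms into $x_{n+1}=2h\,x_n-h\,x_{n-1}$ and pulling out a factor $h^{n/2}$ (resp.\ $h^{(n+1)/2}$) reduces the step to exactly the Chebyshev recurrence at argument $\sqrt h$. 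Hence $H^n = h^{(n-1)/2}U_{n-1}(\sqrt h)\,H - h^{n/2}U_{n-2}(\sqrt h)\,I$.

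Finally I would read off the four entries by substituting $H=\begin{bmatrix}2h&-h\\1&0\end{bmatrix}$ and collapsing the $(1,1)$ entry via $2\sqrt h\,U_{n-1}(\sqrt h)-U_{n-2}(\sqrt h)=U_n(\sqrt h)$, which yields
\[
  H^n \;=\; \begin{bmatrix} h^{n/2}U_n(\sqrt h) & -h^{(n+1)/2}U_{n-1}(\sqrt h) \\ h^{(n-1)/2}U_{n-1}(\sqrt h) & -h^{n/2}U_{n-2}(\sqrt h) \end{bmatrix}.
\]
Setting $n=2m$ and $n=2m+1$ and distributing the powers of $h$ gives precisely the two displayed formulas. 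Since $U_{2m}$ is an even polynomial and $U_{2m\pm 1}$ odd, every entry is in fact a polynomial in $h$, so the identity (derived assuming $h>0$) extends to $h=0$ by continuity; alternatively $h=0$ is immediate from $H^2=0$. The argument is essentially bookkeeping; the only point requiring care is tracking the half-integer powers of $h$ and the index shifts when passing between the single uniform formula and the parity-split formulas. An equivalent route, if one prefers to avoid the uniform formula, is a direct simultaneous induction on the two stated expressions for $H^{2n}$ and $H^{2n+1}$, computing $H^{n+1}=H\cdot H^n$ and invoking the Chebyshev recurrence at each step.
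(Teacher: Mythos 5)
Your proof is correct, and it takes a genuinely different route from the paper's. The paper proves the two displayed identities directly by induction on the parity-split forms: it verifies $H^2$ by hand, shows the $2n$ case implies the $2n+2$ case by multiplying out $H^2 \cdot H^{2n}$ entrywise and invoking the recurrence $U_{m+1}(r)=2rU_m(r)-U_{m-1}(r)$ four times, and then derives the odd case from the even one by a further multiplication by $H$. You instead observe that $\det H = h$ and $\trace H = 2h$, so Cayley--Hamilton gives $H^2 = 2hH - hI$; writing $H^n = a_nH + b_nI$ reduces everything to the scalar recurrence $x_{n+1}=2hx_n-hx_{n-1}$, which you recognize as the Chebyshev recurrence at argument $\sqrt{h}$ after pulling out half-integer powers of $h$. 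This yields a single uniform formula for $H^n$ from which both displayed identities follow by specializing the parity of $n$; I checked that the exponents of $h$ in all four entries match the lemma in both cases. Your approach buys a conceptual explanation for why Chebyshev polynomials of the second kind appear at all (they are the solutions of the rescaled three-term recurrence), replaces eight entrywise computations with one two-term linear recurrence, and dispenses with the even/odd case split until the final substitution; the paper's approach is more elementary in that it never leaves the ring of polynomial identities in $h$ and so never has to worry about $\sqrt{h}$ at $h=0$, a point you correctly handle by noting that the parity of the $U_m$ makes every entry a polynomial in $h$ (or by continuity). Either argument suffices for the downstream bound $\norm{H^j}_2 \leq 2(j+1)$.
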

\begin{proof}
We begin by proving the identity for even powers $2n$, by induction. The base case $n=1$ holds by manual computation, noting the following facts:
\[ H^2 = h \begin{bmatrix}
4h-1 & -2h \\
2 & -1
\end{bmatrix},
\]
\[ U_0(x) = 1, \quad U_1(x) = 2x, \quad U_2(x) = 4x^2-1.
\]
Next, we prove the inductive step, showing that the identity for $2n$ implies the same for $2n+2$.
Below, we substitute $r := \sqrt{h}$ for clarity:
\begin{gather*}
    H^{2n+2} = h \cdot \begin{bmatrix} 4h-1 & -2h \\ 2 & -1 \end{bmatrix} H_i^{2n} \\
    = h^{n+1} \begin{bmatrix} 4r^2-1 & -2r^2 \\ 2 & -1 \end{bmatrix} \begin{bmatrix} U_{2n}(r) & -r U_{2n-1}(r) \\ \frac{U_{2n-1}(r)}{r} & -U_{2n-2}(r) \end{bmatrix},
\end{gather*}
Computing each entry of the matrix product, and applying the recurrence
$U_{n+1}(r) = 2rU_n(r) - U_{n-1}(r)$:
\begin{align*}
    \bra{ H^{2n+2} }_{11} &= h^{n+1} \pa{ (4r^2-1) U_{2n}(r) - 2r U_{2n-1}(r) } \\
    &= h^{n+1} \pa{ 2r U_{2n+1}(r) - U_{2n}(r) } = h^{n+1} U_{2n+2}(r), \\
    \bra{ H^{2n+2} }_{12} &= -h^{n+1} \pa{ r (4r^2-1) U_{2n-1}(r) - 2r^2 U_{2n-2}(r) } \\
    &= -h^{n+1} r \bra{ H^{2n-1} }_{11} =  h^{n+1} r U_{2n+1}(r),  \\
    \bra{ H^{2n+2} }_{21} &= h^{n+1} \cdot \frac{ 2r U_{2n}(r) - U_{2n-1}(r)}{r} = h^{n+1} \frac{ U_{2n+1}(r) }{r}, \\
    \bra{ H^{2n+2} }_{22} &= -h^{n+1} \pa{ 2r U_{2n-1}(r) - U_{2n-2}(r) } = -h^{n+1} U_{2n}(r).
\end{align*}
This concludes the claimed identity for the even case. Finally, we show that the $2n+1$ case follows from the $2n$ case:
\begin{align*}
    H^{2n+1}
    = h^n \begin{bmatrix} 2r^2 & -r^2 \\ 1 & 0 \end{bmatrix} \begin{bmatrix} U_{2n}(r) & -r U_{2n-1}(r) \\ \frac{U_{2n-1}(r)}{r} & -U_{2n-2}(r) \end{bmatrix},
\end{align*}
so that
\begin{align*}
    \bra{ H^{2n+1} }_{11} &= h^n \pa{2r^2 U_{2n}(r) - r U_{2n-1}(r)}
    = h^n \cdot r U_{2n+1}(r), \\
    \bra{ H^{2n+1} }_{12} &= - h^n \pa{2r^3 U_{2n-1}(r) - r^2 U_{2n-2}(r)}
    = - h^n \cdot h U_{2n}(r), \\
    \bra{ H^{2n+1} }_{21} &= U_{2n}(r), \\
    \bra{ H^{2n+1} }_{22} &= -rU_{2n-1}(r).
\end{align*}
This completes the proof of the odd case, hence Lemma~\ref{lem:cheb-powers}.
\end{proof}

To finish the proof of Lemma~\ref{lem:chebyshev}, we use the classical fact that $|U_j(r)| \leq j+1$ for all $|r| \leq 1$,
and note that each entry of $H^j$ is the value of some $U$, times a scalar between $-1$ and $1$; the $1/r$ factor in $[H^{2n}]_{21}$ gets absorbed because $h/r = r \leq 1$. This shows that for all $j \geq 2$, each entry of the $2 \times 2$ matrix $H^j$ has absolute value bounded by $|U_{j+1}(r)| \leq j+1$; the same can be verified manually for $j=1$. We conclude \cref{lem:chebyshev} by bounding $\norm{H^j}_2 \leq \norm{H^j}_1 \leq 2(j+1)$.
\end{proof}

\section{Proofs of the main theorems}
\label{app:proofs-theorems}
In this section we use the potential-bounded regret and stability lemmas to complete the proofs of Theorems~ \ref{thm:degd}, \ref{thm:depgd}, and \ref{thm:deagd}.

\begin{proof}[Proof of Theorem \ref{thm:degd}]
Substituting the result of \cref{lem:gdregret,lem:stab-gd} in \cref{thm:main}
gives the following:
\[
  \E[F(w_{\out})] - F(w^*) 
  \leq \frac{2\eta K\rho^2}{B} + \frac{\|w_{\init} - w^*\|^2}{2\eta KT}.
\] 
Plugging in the choice of $\eta$ concludes the result.
\end{proof}

\begin{proof}[Proof of Theorem \ref{thm:depgd}]
Substituting the result of \cref{lem:pgdregret,lem:stab-pgd} in \cref{thm:main}
gives the following
\begin{align*}
    \E[F(w_{\out})] - F(w^*)
    &\leq \frac{2\rho^2}{B \gamma} \brk!{ 1 - (1 - \eta \gamma)^K } + \frac{\gamma \|w_{\init} - w^*\|^2}{2T} + \frac{\|w_{\init} - w^*\|^2}{2\eta KT} 
    \\
    &\leq \sqrt{1 + \frac{1}{K}} \cdot \frac{ 2\rho \|w_{\init} - w^*\|}{\sqrt{BT}} + \frac{\beta \|w_{\init} - w^*\|^2}{2KT}
    .
\end{align*}
Plugging in the choice of $\eta$ now concludes the result. 
\end{proof}

\begin{proof}[Proof of Theorem~\ref{thm:deagd}]
We will use the notation $\lambda_{t}, d_t$ to denote the $\lambda_{\out}, d_{\out}$ returned by $\A$ at iteration $t$ of Algorithm~\ref{alg:dataecho}. From \cref{lem:nagdregret}
we get that
\begin{align*}
  (\lambda_{t}^2 - \lambda_{t}) (\fBar_{\batch^{(t)}}(w_{t+1}) 
    &- \fBar_{\batch^{(t)}}(w^*)) - (\lambda_{t-1}^2 - \lambda_{t-1})(\fBar_{\batch^{(t)}}(w_{t-1}) - \fBar_{\batch^{(t)}}(w^*))\\
  &\leq \frac{1}{2\eta}(\|w_{t} + \lambda_{t} d_{t} - w^*\|^2 - \|w_{t+1} + \lambda_{t+1} d_{t+1} - w^*\|^2).
\end{align*}

Let $\E_t[\cdot]$ be the expectation conditioned with respect to the randomness
up time $t$ (inclusive). We now get from the uniform stability of $\A$ that
\begin{align*}
  \E[F(w_{t+1}) - F(w^*)]
  &= \E_{t-1}[\E_{\batch^{(t)}}[F(w_{t+1}) -F(w^*)]] \\
  &= \E_{t-1}[\E_{\batch^{(t)}}[\fBar_{\batch^{(t)}}(w_{t+1}) - \fBar_{\batch^{(t)}}(w^*)] ] + O\left(\frac{\eta \rho^2 K^2}{B}\right).
\end{align*}
Using the above inequalities, appropriately scaling and summing over $t$ and noting that $\lambda_0 = 1$ we get
\begin{align*}
  \lambda_{T-1}^2 \pa{ \E[F(w_T)] - F(w^*) }
  = \frac{\|w_0 - w^*\|^2}{2\eta} + O\left(\frac{\eta \rho^2 K^2 \sum_t \lambda_t^2}{B}\right).
\end{align*}
Using standard bounds on $\lambda_t$, we get that $\lambda_t = \Theta(tK)$. Substituting this in the above equation gives
\begin{align*}
  \E[F(w_T)] - F(w^*) 
  = O\left(\frac{\|w_0 - w^*\|^2}{2\eta K^2 T^2} + \frac{\eta \rho^2 K^2 T}{B}\right).
\end{align*}
Now, using the value of $\eta$ prescribed in the theorem, we conclude the result. 
\end{proof}

\section{Experiment Details}
\label{sec:experiment-appendix}

\subsection{Datasets}
\paragraph{CoverType.} We used the scaled binary classification version of this dataset, as provided as a benchmark alongside \texttt{libsvm}. This dataset contains 581012 labeled examples, with feature dimension 54; thus, the logistic regression model has 110 parameters (including biases). Since this work is not concerned with generalization on holdout validation data, and this dataset does not come with a canonical train/test split, we trained on all of the examples. However, we note that logistic regression underfits to this dataset; the generalization gap was negligible when we tried random 90\%-10\% splits, and did not affect the trends seen in Figure~\ref{fig:covtype-echoing}.

\paragraph{MNIST.} We used the training set of MNIST, which contains 60000 examples. The feature dimension is 764, and there are 10 classes, for a total of 7650 parameters (including biases). The pixels were normalized to the range $[0,1]$. Again, the generalization gap is negligible in this setting; the results do not change (and the specific convergence times change only slightly) upon computing the convergence criterion using the canonical holdout validation set of 10000 examples.

In all experiments, batches were sampled with replacement (rather than the usual per-epoch shuffling convention), to remove artifacts arising from non-independence.

\subsection{Measuring convergence time}
Thresholds for convergence were chosen to lie within 1\% of the globally optimal training loss. We used 0.54 for CoverType and 0.3 for MNIST. We remark that although these choices are arbitrary, the trends exhibited in our experiments were not sensitive to the precise choice of threshold (although the convergence times can be dramatically different). To reduce variance, we record convergence when the mean of the past 10 losses lies below the threshold. Again, the trends in our experiments were not sensitive to this choice of aggregation. The means and standard deviations of convergence times in Figures~\ref{fig:covtype-echoing} and \ref{fig:mnist-echoing} were computed over 20 runs.

\subsection{Hyperparameters}
Learning rates were selected by grid search over an exponential grid (i.e. \texttt{numpy.logspace}) between $0.01$ and $10$, where consecutive candidates were $10^{1/20}$ apart.

The logistic regression models were trained with bias parameters; all parameters were initialized at zero.

\subsection{Computing infrastructure}
To enable rapid evaluation of training losses on these $\sim\!100$MB datasets, all optimization experiments were implemented in PyTorch on an NVIDIA V100 GPU machine. Each individual run took less than 1 minute.

\end{document}